\DeclareRobustCommand\onedot{\futurelet\@let@token\@onedot}
\def\@onedot{\ifx\@let@token.\else.\null\fi\xspace}
\def\eg{\emph{e.g}\onedot, } 
\def\ie{\emph{i.e}\onedot, }
\newcommand{\partopic}[1]{\paragraph{#1}}
\newcommand{\meanstd}[2]{#1\tiny{\(\pm\)#2}}
\newcommand{\mysection}[2][]{
    \ifthenelse{ \equal{#1}{} }
    {\section{\texorpdfstring{#2}{#2}}}
    {\section{\texorpdfstring{#2}{#1}}}
}
\newcommand{\mysubsection}[2][]{
    \ifthenelse{ \equal{#1}{} }
    {\subsection{\texorpdfstring{#2}{#2}}}
    {\subsection{\texorpdfstring{#2}{#1}}}
}
\newcommand{\mysubsubsection}[2][]{
    \ifthenelse{ \equal{#1}{} }
    {\subsubsection{\texorpdfstring{#2}{#2}}}
    {\subsubsection{\texorpdfstring{#2}{#1}}}
}
\newcommand{\data}[1]{\bm{#1}}
\renewcommand{\vec}[1]{\mathbf{#1}}
\DeclareMathAlphabet{\mathsfit}{\encodingdefault}{\sfdefault}{m}{sl}
\SetMathAlphabet{\mathsfit}{bold}{\encodingdefault}{\sfdefault}{bx}{n}
\newcommand{\set}[1]{\mathbb{#1}}
\newcommand{\norm}[2]{\lVert#1\rVert_{#2}}
\newcommand{\expect}{\mathbb{E}}
\newcommand{\norml}[1]{\ell^{#1}\textrm{-norm}}
\newtheorem{proposition}{Proposition}
\NewCommandCopy{\oldexp}{\exp}
\renewcommand{\exp}[1]{\oldexp\left(#1\right)}
\newcommand{\history}{\bm{\mathcal{H}}}
\newcommand{\lossn}{L_n}
\newcommand{\losse}{L_{e}}
\newcommand{\model}{\mathcal{M}}
\newglossaryentry{01ip}
{
    name={\(0-1\) integer program},
    description={\(0-1\) integer program is the root of EHD.}
}
\newglossaryentry{ehd}{
    name = {Explanation for MTPP},
    description={The name of the problem.}
}
\newglossaryentry{ca}
{
    name={counterfactual explanation},
    description={Counterfactual analysis is the methodology underpinning EHD.}
}
\newglossaryentry{fa}
{
    name={factual explanation},
    description={factual analysis is the methodology underpinning EHD.}
}
\newglossaryentry{ca1}
{
    name={counterfactual analysis},
    description={Counterfactual analysis is the methodology underpinning EHD.}
}
\newglossaryentry{ar}
{
    name={abductive reasoning},
    description={A general way to reason.}
}
\newglossaryentry{cr}
{
    name={counterfactual analysis},
    description={Counterfactual reasoning may be more appropriate for describing our task.}
}
\newglossaryentry{cif}
{
    name={conditional intensity function},
    description={One conditional intensity function defines an MTPP.}
}
\newglossaryentry{so}
{
    name={StackOverflow},
    description={Dataset StackOverflow}
}
\newglossaryentry{retweet}
{
    name={Retweet},
    description={Dataset Retweet}
}
\newglossaryentry{yelp}
{
    name={Yelp},
    description={Dataset Yelp}
}
\newacronym{ce}{counterfactual explanation}{Counterfactual Explanation}
\newacronym{mtpp}{MTPP}{Marked Temporal Point Process}
\newacronym{nmtpp}{NMTPP}{Neural Marked Temporal Point Process}
\newacronym{model}{CFF}{Counterfactual and Factual Explainer for MTPP}
\newacronym{mip}{MIP}{Mixed-Integer Programming}
\newacronym{co}{CO}{Combinatorial Optimization}
\newacronym{milp}{MILP}{Mixed-Integer Linear Programming}
\newacronym{lp}{LP}{Linear Programming}
\newacronym{st-gs}{ST-GS}{Straight-Through Gumbel-Softmax trick}
\newacronym{nll}{NLL}{Negative Log-Likelihood}
\newacronym{cfe}{CFE}{Counterfactual Explanations}
\newacronym{dppl-diff}{EHED}{Explainable History Events Distinguishment}
\newacronym{dppl-diff-metric}{DS}{Distinguishment Significance}
\newacronym{card-diff}{EMHD}{Explainable Minimal History Distillation}
\newacronym{card-diff-metric}{LMHS}{Length of Minimal Historical Subset}
\newacronym{rd}{RR}{Random Removal}
\newacronym{gs}{GS}{Greedy Search}
\newacronym{asd}{ASD}{Auxiliary Selection Distribution}
\newacronym{gst}{GS trick}{Gumbel-softmax trick}
\newacronym{mgst}{Mixture GS trick}{Mixture Gumbel-Softmax trick}
\newacronym{gp}{GP}{Gradient Passthrough}
\newacronym{ite}{ITE}{Individual Treatment Effect}
\newacronym{ate}{ATE}{Average Treatment Effect}
\newacronym{llm}{LLM}{Large Language Model}
\newacronym{llms}{LLMs}{Large Language Models}
\newacronym{roc}{ROC curve}{Receiver Operating Characteristic curve}
\newacronym{auc}{AUC}{the area under the ROC}
\newacronym{curve}{DOC curve}{Distiller Operating Characteristic curve}
\newacronym{audc}{AUDC}{the area under the \acrshort{curve}}
\newacronym{tlpp}{TLPP}{Temporal Logical Point Process}
\newacronym{gnn}{GNN}{Graph Neural Network}
\newacronym{otd}{OTD}{Optimal Transport Distance}
\begin{document}

\author[A]{\fnms{Sishun}~\snm{Liu}\thanks{Corresponding Author. Email: sishun.liu@student.rmit.edu.au}}
\author[A]{\fnms{Ke}~\snm{Deng}}
\author[A]{\fnms{Xiuzhen}~\snm{Zhang}}
\author[B]{\fnms{Yan}~\snm{Wang}}

\address[A]{RMIT University, Australia}
\address[B]{Macquarie University, Australia}

\begin{frontmatter}

\paperid{218} 


\title{Learning \acrlong{mtpp} Explanations based on Counterfactual and Factual Reasoning}

\begin{abstract}

Neural network-based \acrfull{mtpp} models have been widely adopted to model event sequences in high-stakes applications, raising concerns about the trustworthiness of outputs from these models. This study focuses on \textit{\gls{ehd}}, aiming to identify the minimal and rational explanation, that is, the minimum subset of events in history, based on which the prediction accuracy of \acrshort{mtpp} matches that based on full history to a great extent and better than that based on the complement of the subset. This study finds that directly defining \gls{ehd} as \gls{ca} or \gls{fa} can result in irrational explanations. To address this issue, we define \gls{ehd} as a combination of \gls{ca} and \gls{fa}. This study proposes \acrfull{model} to solve \gls{ehd} with a series of deliberately designed techniques. Experiments demonstrate the correctness and superiority of \acrshort{model} over baselines regarding explanation quality and processing efficiency.



\end{abstract}
\end{frontmatter}

\mysection{Introduction}
\label{sec:intro}
The \textit{\acrfull{mtpp}}~\citep{daley_introduction_2003} is a well-defined stochastic process. The \acrshort{mtpp} models historical event sequences to a probability distribution that can be used to predict future events. Learning \acrshort{mtpp} by neural networks has been well investigated~\citep{shchurNeuralTemporalPoint2021}. 
These algorithms enable people to train and use \acrshort{mtpp} in high-stakes applications such as fake news mitigation~\citep{zhang_vigdet_2021,zhang_counterfactual_2022} and Electronic Health Record (EHR) modeling~\citep{enguehardNeuralTemporalPoint2020}. Decision making in high-stakes environments requires rational decisions~\citep{sahohRoleExplainableArtificial2023}, but how a neural network-based \acrshort{mtpp} maps historical events to a probability distribution remains unknown, raising concerns about the trustworthiness of outputs from these models in these high-stakes applications.

To fill the gap, this study targets \textit{\gls{ehd}}. We can identify a subset of events in history, based on which the prediction accuracy of \acrshort{mtpp} matches that based on the full history to a great extent. The identified subset can explain why \acrshort{mtpp} outputs in that particular way based on the full history. In practice, many subsets can satisfy the criterion. Among them, we are interested in the \textit{minimal and rational explanation}, \ie the minimum subset based on which the prediction accuracy of \acrshort{mtpp} is better than that based on the complement of the subset in history.
For example, an \acrshort{mtpp} model outputs the distribution of future events based on which the next earthquake is predicted. 
An explanation of the \acrshort{mtpp} model identifies which past events drive this prediction-for instance, a magnitude 5 earthquake at 11 pm five days ago and a magnitude 4 earthquake at 5 am thirty days ago. A minimal explanation requires that all identified earthquakes in history are indispensable to the model output, while a rational explanation further ensures that unselected earthquakes are irrelevant.




Recently, the explanation of model outputs has been studied for various black-box models. 
Some studies are based on \textit{\gls{ca}} to explain classifiers \citep{karlssonExplainableTimeSeries2018} and recommenders \citep{barkanCounterfactualFrameworkLearning2024}. Some are based on \textit{\Gls{fa}} for explaining classifiers~\citep{fernandezFactualCounterfactualExplanations2022} and \acrshort{gnn} models~\citep{linGenerativeCausalExplanations2021,liuMultiobjectiveExplanationsGNN2021,caiProbabilityNecessitySufficiency2025}. 
\Gls{ca} is a model-agnostic explanation method, which looks for the smallest modification to the model input that can change the output once applied and calls the obtained change a valid explanation~\citep{guidotti_counterfactual_2022}. However, we find that \gls{ca} alone does not guarantee a rational explanation for \acrshort{mtpp} models.
\Gls{fa} identifies the minimum set of features based on which the model returns the same result as that based on all features~\citep{xuCFE2CounterfactualEditing2024a}. Similar to \gls{ca}, we find that \gls{fa} alone cannot guarantee a rational explanation for \acrshort{mtpp} models either.

To avoid such irrational results, our study defines \gls{ehd} as a combination of counterfactual and \gls{fa}. Studies on \acrshort{gnn} identify subgraphs to explain \acrshort{gnn} outputs using \gls{ca} and \gls{fa}~\citep{tan_learning_2022, chen_grease_2022, xu_counterfactual_2022}.
Their methods cannot be applied to solve \gls{ehd}. They search for subgraphs in \acrshort{gnn} to explain the classification, while we target historical events to explain the output of \acrshort{mtpp}. In particular, unlike classifiers, 
the output of \acrshort{mtpp} models is a probability distribution, 
where it is not straightforward to define the output changes. Following~\citet{budhathokiWhyDidDistribution2021}, we decide that the distribution has changed if the discrepancy between the new and original distribution is greater than a threshold; otherwise, there is no change.

It is challenging to solve \gls{ehd}, which is a combinatorial problem. The optimal solution is intractable. We deliberately design a learning-based solution named \textit{\acrfull{model}} which optimizes parameters to select the fewest historical events by minimizing a surrogate hinge loss under counterfactual and factual constraints. In particular, the \acrshort{model} probes various combinations of historical events by enabling backpropagation with the Gumbel-softmax trick~\citep{bengio_estimating_2013,maddison_concrete_2016} and exploring the consistent relation between differentiable \(\norml{1}\) and nondifferentiable \(\norml{0}\) in the context of our problem. Our contributions are as follows:


\begin{enumerate}
    \item This work proposes the first study on \gls{ehd} to improve the trustworthiness of outputs from \acrshort{mtpp} models, an important problem, particularly in high-stakes applications.

    \item 
    This study finds the issue when defining \gls{ehd} as \gls{ca} or \gls{fa} and overcomes the issue by defining \gls{ehd} as the combination of \gls{ca} and \gls{fa}.
    \item This study develops \acrfull{model} with a series of deliberately designed techniques. The superiority of \acrshort{model} over baselines regarding explanation quality and processing efficiency has been verified by experiments. 
\end{enumerate}

\section{Related Works}\label{sec:related}


\subsection[Counterfactual and Factual Explanation]{Counterfactual and Factual Explanation}

\textbf{Counterfactual Explanation} has been used to analyze how 
classifiers make decisions on time-series data, images, texts, etc.
~\citep{karlssonExplainableTimeSeries2018,verma_counterfactual_2022,guidotti_counterfactual_2022}, how user behaviors and/or item features affect recommendations~\citep{liFairnessRecommendationFoundations2023, geSurveyTrustworthyRecommender2024}, and how \acrfull{gnn} models make predictions~\citep{prado-romeroSurveyGraphCounterfactual2024}.
For classifiers, \gls{ca} approaches are divided into two families, Optimization (OPT), an approach to resolve counterfactual explanations by minimizing specific losses using optimization algorithms~\citep{ramakrishnan_synthesizing_2020,parmentier_optimal_2021} and Heuristic Search Strategy (HSS), exploring various strategies like greedy~\citep{goyal2019counterfactual}, hill-climbing~\citep{lash2017generalized}, and best-first~\citep{10.25300/MISQ/2014/38.1.04} to search for counterfactual explanations.
For recommender systems, existing counterfactual explanation approaches include heuristic search~\citep{ghazimatin_prince_2020,tran_counterfactual_2021} and optimization~\citep{mu_alleviating_2022, barkanCounterfactualFrameworkLearning2024}. Among them, two concerns \gls{ca} related to the sequences of user activities~\citep{ghazimatin_prince_2020,tran_counterfactual_2021}. \citet{ghazimatin_prince_2020} proposed PRINCE to greedily remove minimal activities to replace current recommendations. In addition, \citet{tran_counterfactual_2021} discussed and addressed the limitation of PRINCE.
For \acrfull{gnn} models, \gls{ca} has been studied. 
\citet{abrate_counterfactual_2021} applied \gls{ca} to explain a \acrshort{gnn} model which describes human brains. 
\citet{zhang_page-link_2023} proposed PaGE-LINK to explain a learned \acrshort{gnn} recommender.

\textbf{Factual Explanation} has been used to help explain classifiers~\citep{fernandezFactualCounterfactualExplanations2022} and \acrshort{gnn} models~\citep{linGenerativeCausalExplanations2021,liuMultiobjectiveExplanationsGNN2021,caiProbabilityNecessitySufficiency2025}. For classifiers, 
\gls{fa} is believed insufficient in many situations and needs to be improved by \gls{ca}~\citep{fernandezFactualCounterfactualExplanations2022}. For the \acrshort{gnn} models, some researchers realized that the subgraph extracted for \gls{ca} or \gls{fa} can lead to incomplete explanations. To address this,~\citet{tan_learning_2022} proposed Counterfactual and Factual (CF\(^2\)) reasoning. CF\(^2\) merges \gls{ca} with factual reasoning to ensure that the obtained subgraph is complete. A similar idea has been explored by~\citet{chen_grease_2022} and~\citet{xu_counterfactual_2022}.

\textbf{Remarks}: No previous work investigates the explanation for \acrshort{mtpp} models based on \gls{ca} or \gls{fa}. Although irrelevant, we would like to mention that some \acrshort{mtpp} studies extract logic rules to strengthen confidence in future event prediction \citep{liExplainingPointProcesses2021,songLatentLogicTree2024,yangNeuroSymbolicTemporalPoint2024}, and others draw mark-wise attention maps for a better mark prediction of future events~\citep{zhang_self-attentive_2020,zhangLearningNeuralPoint2021}. In addition, some studies are based on well-defined mathematical formulas (\ie white box model) where the relationship between events at certain positions over time is parameterized, and the learned parameters can indicate the correlation strength between events to explain the occurrence of future events \citep{ideCardinalityRegularizedHawkesGrangerModel2021,wuLearningGrangerCausality2024}. They work on white-box models and, therefore, cannot help solve our problem for black-box \acrshort{mtpp} models.

\subsection{Counterfactual Prediction}
\label{sec:cp}
Please note that \textit{counterfactual prediction} and \textit{counterfactual explanation} are different problems. Counterfactual predictions refer to the process of estimating what would have happened to the output under a given alternative scenario~\citep{pearl_causality_2009}, while \gls{ca} searches for which scenario would lead to a different output~\citep{guidotti_counterfactual_2022}.
There are studies on counterfactual predictions for \acrshort{mtpp} models~\citep{gao2021causal, zhang_counterfactual_2022, noorbakhsh_counterfactual_2022, Hizli2023}.
\citet{zhang_counterfactual_2022} model the behavior of social media users using \acrshort{mtpp} and investigate how the probability of one user creating posts is influenced if engaged with misinformation.
\citet{noorbakhsh_counterfactual_2022} focus on unbiased sampling of an \acrshort{mtpp} model in an alternative scenario for next event predictions. \citet{gao2021causal} use counterfactual prediction to investigate the causal influence between event marks encoded in \acrshort{mtpp} models. 
~\citet{prosperi2020causal} and ~\citet{Hizli2023} apply counterfactual prediction on healthcare \acrshort{mtpp} models to estimate direct and indirect effects of a treatment.
Furthermore, counterfactual predictions are valuable in applications such as policy evaluation~\citep{ferraro2009counterfactual} and decision-making~\citep{schulam2017reliable}, where understanding the impact of hypothetical changes can guide more informed and effective strategies. In short, the counterfactual prediction and explanation are different problems, and the methods for counterfactual prediction on \acrshort{mtpp} cannot be applied to solve \gls{ehd}.

\section{Problem Definition}
\mysubsection[MTPP]{\acrlong{mtpp}}
\label{sec:mtpp}

The \acrfull{mtpp} describes a random process of an event sequence \(\data{x} = (x_1, x_2, \cdots, x_n)\).
Each event \(x_i = (m_i, t_i)\) comprises a categorical mark \(m_i \in \set{M} = \{k_1, k_2, \cdots, k_M\}\) and its occurrence time \(t_i\).
This paper considers the simple \acrshort{mtpp}, which only allows at most one event at every time, thus \(t_i<t_j\) if \(i<j\).
Given the history up to (exclusive) the current time \(t\), denoted as \(\history\), the \gls{cif} \(\lambda^*(m, t)\) is the probability that an event with mark \(m\) will happen at time \(t\)~\citep{daley_introduction_2003}:\footnote{The asterisk reminds that this function conditions on history.} 
\begin{equation}
\label{eqn:def_mtpp_intensity}
    \lambda^*\left(m,t\right) = \lim_{\Delta t \rightarrow 0}{\dfrac{P\left(m ,t \in (t, t+\Delta t]\middle|\history\right)}{\Delta t}}.
\end{equation}
With \(\lambda^*(m, t)\), we can define the joint probability distribution \(p^*(m, t)\) of the next event whose mark is \(m\) and the time is \(t\).
\begin{equation}
    \label{eqn:mtpp_p}
    p^*\left(m, t\right) = \lambda^*\left(m, t\right)\exp{-\sum_{k \in \set{M}}{\int_{t_l}^{t}{\lambda^*(k, \tau)\mathrm{d}\tau}}}.
\end{equation}
The \acrfull{nll} loss on \(\data{x}\) observed in a time interval \([t_0,T]\) is:
\begin{equation}
\label{eqn:nll_of_mtpp}
    L = -\log p(\data{x}) = - \sum_{i = 1}^{n}{\log \lambda^*(m_i, t_i)} + \sum_{k \in \set{M}}{\int_{t_0}^{T}{\lambda^*(k, \tau)\mathrm{d}\tau}}.
\end{equation}
\cref{eqn:nll_of_mtpp} is the training loss of \acrshort{mtpp} models. Most recent \acrshort{mtpp} models are based on neural networks. When using \(p^*(m, t)\) to predict the next event, we first predict the time of the next event \(\bar{t}\) as the expectation of \(p^*(t) = \sum_{m \in \set{M}}{p^*(m, t)}\), then predict the mark of the next event \(\bar{m}\) as the most probable mark at time \(\bar{t}\), \ie \(\bar{m} = \arg\max_{m \in \set{M}}{p^*(m, \bar{t})}\)~\citep{shchurNeuralTemporalPoint2021,panosDecomposableTransformerPoint2024}.

\mysubsection[Counterfactual and Factual Explanation]{Counterfactual and Factual Explanation}
\label{sec:ce}

Counterfactual explanation is a \textit{perturbation-based technique} belonging to the \textit{feature attribution} family. Feature attribution aims at measuring the relevance between input features and model outputs. Perturbation-based technique measures relevance by making small and controlled changes to the input data to gain insights into how the model made that decision~\citep{zhaoExplainabilityLargeLanguage2024}. On the other hand, \gls{fa} is based on \textit{abductive reasoning}, in which a conclusion is drawn based on the best explanation for a given set of observations~\citep{huangReasoningLargeLanguage2023}. 

Suppose the input of a model \(\model\) is \(\data{x}\), and the corresponding model output is \(\data{y} = \model(\data{x})\). \Gls{ca} aims at the smallest change in feature values that can translate to a different output of a model. Specifically, \gls{ca} searches for \(\data{x}^{\prime}\) such that \(\model(\data{x}^{\prime}) \neq y\), and the difference between \(\data{x}\) and \(\data{x}^{\prime}\) is minimal\cite{guidotti_counterfactual_2022}: 

\begin{equation}
    \begin{aligned}
    \min_{\data{x}^{\prime}}&\ d(\data{x}, \data{x}^{\prime}) \\
    \text{s.t.}\ \ \ &\model(\data{x}^{\prime}) \neq \model(\data{x}) \\
    \end{aligned}\label{eqn:ce_optim}
\end{equation}
where \(d(\data{x}, \data{x}^{\prime})\) measures the difference between \(\data{x}\) and \(\data{x}^{\prime}\). 

\Gls{fa} aims at the smallest subset in feature values that can lead to the observed output. Specifically, \gls{fa} searches for \(\data{x}^{\prime}\) such that \(\model(\data{x}^{\prime}) = y\), and the size of \(\data{x}^{\prime}\) is minimal~\cite{tan_learning_2022}: 
\begin{equation}
    \begin{aligned}
    \min_{\data{x}^{\prime}}&\ |\data{x}^{\prime}| \\
    \text{s.t.}\ \ \ &\model(\data{x}^{\prime}) = \model(\data{x}) \\
    \end{aligned}\label{eqn:fa_optim}
\end{equation}
where \(|\data{x}^{\prime}|\) the size of \(\data{x}^{\prime}\). Both \gls{ca} and \gls{fa} adhere to Occam's razor that one should prefer the hypothesis requiring the fewest assumptions about the same prediction\citep{gauch2003scientific}.


\mysubsection{Problem Statement and Formulation}
\label{sec:statement}
We extract a subsequence \((h_1, \cdots, h_{j}, x_{1}, \cdots, x_n)\) from an event sequence. The first part \((h_1, \cdots, h_{j})\), denoted as \(\history\), is the history relative to the second part \((x_{1}, \cdots, x_n)\), denoted as \(\data{x}\). Given \(\history\) and \(\data{x}_{<i}\coloneq (x_1,\cdots,x_{i-1})\subset \data{x}\), a black-box \acrshort{mtpp} model \(\mathcal{M}\) can be used to estimate the distribution of the next event where the probability of \(x_i\) is denoted as \(p(x_i|x_{<i}, \history)\). The higher \(p(x_i|x_{<i}, \history)\) means the estimated distribution fits \(x_i\) better and implies a more accurate prediction of the next event. To evaluate \(p(x_i|x_{<i}, \history)\) for all \(x_{i}\in \data{x}\), a suitable measure is \textit{perplexity}.
The perplexity is defined as:
\begin{equation}
    \label{eqn:posterior}
    \mathrm{ppl}(p(\data{x}|\history))
    = \exp{-\frac{1}{|\data{x}|} \log \prod_{i = 1}^{n}{p(x_i|\data{x}_{<i}, \history)}}.
\end{equation}
\(\mathrm{ppl}(\cdot)\) is perplexity. 
A lower perplexity means higher \(p(x_i|x_{<i}, \history)\) for all \(x_{i}\in \data{x}\) based on \(\history\), indicating higher prediction accuracy. 

\begin{figure*}[ht]
    \centering
    \includegraphics[width=0.8\textwidth]{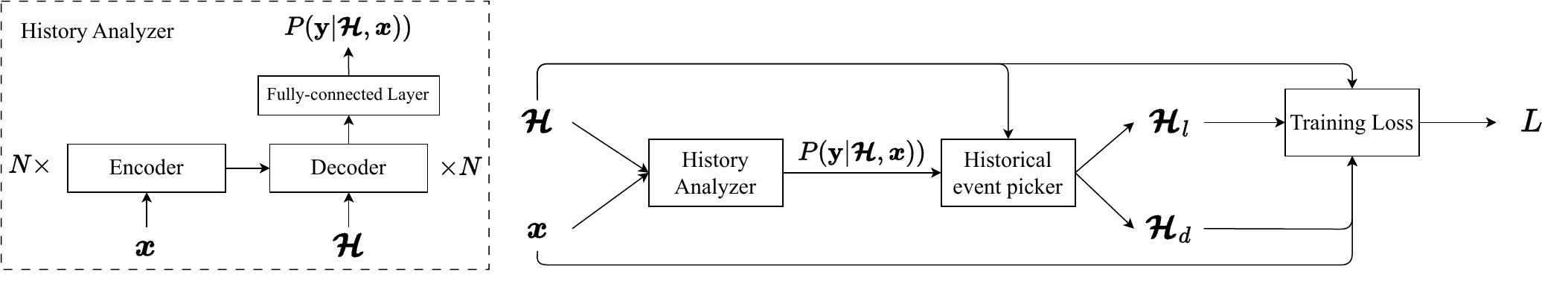}
    \caption{Architecture of \acrfull{model}.} 
    \label{fig:ehd_model}
\end{figure*}

This study aims to identify the rational explanation from \(\history\), denoted \(\history_d\). First, \(\history_d\) is an explanation, so $\mathrm{ppl}(p(\data{x}|\history)) \geqslant \epsilon_d* \mathrm{ppl}(p(\data{x}|\history_{d}))$, where $\epsilon_d\in (0,1)$ and \(\epsilon_d\) should be high. This ensures that the prediction accuracy of \acrshort{mtpp} based on $\history_{d}$ matches that based on $\history$ to a great extent.
Second, \(\history_d\) is rational and minimal. Being rational means that \(\mathrm{ppl}(p(\data{x}|\history_d)) < \mathrm{ppl}(p(\data{x}|\history_l))\) where \(\history_l = \history\setminus\history_d\). This ensures that the prediction accuracy of \acrshort{mtpp} based on \(\history_d\) is better than that based on \(\history_l\). Being minimal means that no other rational explanations have fewer events than \(\history_d\).



Our study finds that defining \gls{ehd} as \gls{ca} or \gls{fa} can lead to irrational explanations, \ie \(\mathrm{ppl}(p(\data{x}|\history_d)) > \mathrm{ppl}(p(\data{x}|\history_l))\), as shown in \cref{fig:compare_conv_ca}.
To address this issue, we define \gls{ehd} as a combination of \gls{ca} and \gls{fa}:
\begin{equation}
\begin{aligned}
\min_{\history_{d}\subseteq\history}&|\history_d| \\
\text{s.t.}\ \ \ &\log \mathrm{ppl}(p(\data{x}|\history)) - \log\mathrm{ppl}(p(\data{x}|\history_{l})) \leqslant \log \epsilon_l, \\
\ \ \ &\log \mathrm{ppl}(p(\data{x}|\history)) - \log\mathrm{ppl}(p(\data{x}|\history_{d})) \geqslant \log\epsilon_d, \\
\ \ \ \ \ &\epsilon_d>\epsilon_l \\
\end{aligned}\label{eqn:new_optimization_problem}
\end{equation}
where
\(\history_d\cap \history_l=\varnothing\), \(\history_d\cup \history_l=\history\), \(\epsilon_{l}\) and \(\epsilon_{d}\) are hyperparameters. The first constraint is counterfactual reasoning to enforce the prediction accuracy of \acrshort{mtpp} based on $\history_l$ low by applying a threshold \(\epsilon_{l}\in (0,1)\). The second constraint is the factual reasoning to ensure that the prediction accuracy of \acrshort{mtpp} based on $\history_l$ is high by applying threshold \(\epsilon_{d}\in (0,1)\). The third constraint guarantees that the prediction accuracy of \acrshort{mtpp} based on $\history_d$ is higher than $\history_l$. 

Users can control the resulting explanation by setting \(\epsilon_d\) and \(\epsilon_l\). Because the events in \(\history_{d}\) serve as a rational explanation why \acrshort{mtpp} outputs the particular probability distribution based on the full history, \(\epsilon_d\) should be high enough and \(\epsilon_l\) should be low enough. However, the extremely low \(\epsilon_l\) can lead to \(\history_d\) with all events in history. To avoid this situation, \(\epsilon_d=0.9\) and \(\epsilon_l=0.5\) or $0.6$ are default settings in this study. With the definition of \gls{ehd}, we have the following proposition:  
\begin{proposition}
    \Gls{ehd} defined in \cref{eqn:new_optimization_problem} always has a solution for any \(\epsilon_l \in (0, 1)\), \(\epsilon_d \in (0, 1)\), and \(\epsilon_d>\epsilon_l\).
    \label{prop:ehd_solvable}
\end{proposition}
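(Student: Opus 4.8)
The plan is to prove that the feasible region of the program in \cref{eqn:new_optimization_problem} is non-empty by exhibiting one explicit feasible partition, and then to conclude that a minimizer exists because the search space is finite. The partition I would try first is the trivial one, $\history_d=\history$ and hence $\history_l=\history\setminus\history_d=\varnothing$.

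With this choice, checking the three constraints is almost mechanical. The constraint $\epsilon_d>\epsilon_l$ is a standing hypothesis on the hyperparameters, so there is nothing to verify. The factual constraint becomes $\log\mathrm{ppl}(p(\data{x}\mid\history))-\log\mathrm{ppl}(p(\data{x}\mid\history))\geqslant\log\epsilon_d$, \ie $0\geqslant\log\epsilon_d$, which holds because $\epsilon_d\in(0,1)$ gives $\log\epsilon_d<0$. The only step with real content is the counterfactual constraint, which for $\history_l=\varnothing$ reads $\log\mathrm{ppl}(p(\data{x}\mid\history))-\log\mathrm{ppl}(p(\data{x}\mid\varnothing))\leqslant\log\epsilon_l$. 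Here I would argue that the empty conditioning set gives the model no information linking the history to $\data{x}$, so $\mathrm{ppl}(p(\data{x}\mid\varnothing))$ is the largest predictive perplexity the model can attain on $\data{x}$; under the natural convention that a prediction made from no history carries no predictive power this quantity is $+\infty$, the left-hand side is $-\infty$, and the inequality holds for every $\epsilon_l\in(0,1)$. I expect this to be the main obstacle: if one instead keeps $\mathrm{ppl}(p(\data{x}\mid\varnothing))$ finite, the same conclusion requires the mild but genuine non-degeneracy assumption $\mathrm{ppl}(p(\data{x}\mid\varnothing))\geqslant\epsilon_l^{-1}\,\mathrm{ppl}(p(\data{x}\mid\history))$, and pinning that down is where care is needed.

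Granting the counterfactual constraint, $\history_d=\history$ is feasible, so the feasible set is non-empty. Since $\history$ is a finite set of events, there are only finitely many subsets $\history_d\subseteq\history$, and the objective $|\history_d|$ takes values in the finite set $\{0,1,\dots,|\history|\}$; a non-empty finite family of integers has a least element, so the program has a solution. Everything in this last step is routine — the whole argument reduces to ``$\history_d=\history$ is feasible'' plus the triviality that a minimum over a non-empty finite set is attained, the single delicate point being the value of the perplexity at the empty conditioning set.
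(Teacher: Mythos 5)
Your proposal is essentially the paper's own argument: both rest on exhibiting the single feasible point \(\history_d=\history\), \(\history_l=\varnothing\), and on the claim that \(\mathrm{ppl}(p(\data{x}\mid\varnothing))\rightarrow+\infty\) so that the counterfactual constraint holds vacuously. If anything you are slightly more careful than the paper --- you verify the two constraints separately rather than only their combined ratio \(\mathrm{ppl}(p(\data{x}\mid\history_l))/\mathrm{ppl}(p(\data{x}\mid\history_d))\geqslant\epsilon_d/\epsilon_l\) (which is only a necessary consequence of the constraints, not equivalent to them), you add the finite-search-space step needed for the minimum to actually be attained, and you correctly identify the value of the perplexity at the empty conditioning set as the one genuinely delicate point, which the paper asserts without justification.
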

\begin{proof}
    By connecting the two constraints in \cref{eqn:new_optimization_problem}, we have:
    \begin{equation}
    \frac{\mathrm{ppl}(p(\data{x} | \history_{l}))}{\mathrm{ppl}(p(\data{x} | \history_{d}))} \geqslant \frac{\epsilon_d}{\epsilon_l}
    \label{eqn:new_const}
    \end{equation}
    For any \(\epsilon_l \in (0, 1)\) and \(\epsilon_d \in (0, 1)\) where \(\epsilon_d>\epsilon_l\), we can always move more events from \(\history_l\) to \(\history_d\) so that \cref{eqn:new_const} is satisfied. In the extreme case that \(\frac{\epsilon_d}{\epsilon_l}\) is an any large number, all events in \(\history_l\) can be moved to \(\history_d\) so that \(\history_l=\varnothing\); then we get \(\mathrm{ppl}(p(\data{x} | \history_{l}))\rightarrow +\infty\) that guarantees the inequation in \cref{eqn:new_const} held.

\end{proof}

\mysection[MTPP-CHD]{\acrlong{model}}\label{sec:model}

For \gls{ehd}, we propose \acrfull{model}, which is sketched in \cref{fig:ehd_model}. \acrshort{model} consists of three components. The first component, \textit{history analyzer}, processes \(\history\) and \(\data{x}\) using an encoder-decoder transformer and a fully connected layer. The output is \(p(\vec{y}|\history, \data{x})\), a distribution that tells which events in \(\history\) are likely to be in \(\history_d\) or \(\history_l\). All trainable parameters are in the first component. The second component, \textit{historical event picker}, 
generate \(\history_d\) and \(\history_l\) based on \(p(\vec{y}|\history, \data{x})\). The third component, \textit{training loss}, employs the black-box \acrshort{mtpp} model \(\mathcal{M}\) to evaluate the \(\history_d\)s from the second component, and the loss is used for training \acrshort{model}. The third component only exists during training.

\mysubsection[MTPP-CHD training]{Training of \acrshort{model}}  

The training dataset contains \((\history\), \(\data{x})\) pairs extracted from the event sequences on which \(\mathcal{M}\) can be used to predict the time and mark of the next events. Training \acrshort{model} begins by initializing the parameters of the \textit{history analyzer}. Then, the history analyzer processes each \((\history\), \(\data{x})\) pair in the training dataset 
using an encoder-decoder transformer to represent each event in \(\history\) so that it is aware of other events in \(\history\) and the events in \(\data{x}\). Next, the representations of events in \(\history\) are fed to a fully connected layer and the output is \(p(\vec{y}|\history, \data{x}) = \prod_{i = 1}^{j}{p(y_i|\history, \data{x})}\) where \(j=|\history|\). For each element \(y_i\in \vec{y}\), \(p(y_i|\history, \data{x})\) is a categorical distribution of two categories, \ie \(y_i \in \{0,1\}\). 
If \(p(y_i = 1|\history, \data{x})\) is greater, the \(i\)-th event in \(\history_l\) is more likely moved go to \(\history_d\), otherwise more likely remains in \(\history_l\).


Next, the \textit{historical event picker} draws samples \(\hat{\vec{y}}\) from \(p(\vec{y}|\history, \data{x})\), each sample leading to unique \(\history_d\) and \(\history_l\). Specifically, \(\hat{\vec{y}} = (\hat{y_1}, \hat{y_2}, \cdots, \hat{y_j})\) where \(\hat{y}_i\) (\(1\leq i\leq j\)) is drawn from the categorical distribution \(p(y_i|\history, \data{x})\).  Because we need gradients from \(\history_l\) and \(\history_d\) to train \(p(\vec{y}|\history, \data{x})\) through \(\hat{\vec{y}}\), the sampling process must be differentiable. Therefore, we use the Gumbel-softmax trick~\citep{maddison_concrete_2016} to differentiably get \(\hat{y_i}\), which is:
\begin{equation}
    \hat{y_i} = \frac{\exp{(\log p(y_i = 1|\history, \data{x}) + g_1)/\tau}}{\sum_{C\in\{0, 1\}}{\exp{(\log p(y_i = C|\history, \data{x}) + g_C)/\tau}}}
\end{equation}
where \(g_C\) is a sample from the standard Gumbel distribution, and \(\tau\) is the temperature. In our case, we set \(\tau = 0\) to ensure \(\hat{y}_i\) is either 0 or 1 and employ the Straight Through trick~\citep{bengio_estimating_2013}, so that the sampling process is still differentiable. For each \(\hat{\vec{y}}\), we initialize \(\history_l\) as a copy of \(\history\) and \(\history_d=\varnothing\). If \(\hat{y}_i=1\), \(i\)-th event in \(\history_l\) is moved to \(\history_d\), otherwise, remains in \(\history_l\). This way, we have many \(\hat{\vec{y}}\)s drawn from \(p(\vec{y}|\history, \data{x})\) and each leads to unique \(\history_d\) and \(\history_l\). It is more likely the generated \(\history_d\) consists of the \(i\)-th event if \(p(y_i=1|\history, \data{x})\) is higher than \(p(y_i=0|\history, \data{x})\). If \(\history_d\) is desired, it indicates the probability is ideal; otherwise, needs to be optimized. A similar method has been used for rationalization in natural language processing to search for an optimal combination of sentences related to a claim ~\citep{lei_rationalizing_2016}.






The third component evaluates \(\history_d\)s and their corresponding \(\history_l\)s from the second component for the loss. According to \cref{eqn:new_optimization_problem}, the loss comprises two aspects: \(\losse\) for enforcing perplexity-based constraints and \(\lossn\) for minimizing the length of \(\history_d\). The training loss \(L\) of \acrshort{model} is the weighted sum of \(\lossn\) and \(\losse\):
\begin{equation}
\label{eqn:loss}
    L = \alpha \lossn + \beta \losse.
\end{equation}
Specifically, the loss \(\losse\) is: 
\begin{equation}
    \label{eqn:L_e}
    \losse 
    = \expect_{\hat{\vec{y}}\sim p(\vec{y}|\history, \data{x})}(L_l(\hat{\vec{y}}) + L_d(\hat{\vec{y}}))
\end{equation}
where \(L_{l}(\hat{\vec{y}})\) and \(L_{d}(\hat{\vec{y}})\) are loss for the first and second constraints in \cref{eqn:new_optimization_problem}, respectively. Inspired by~\citep{mothilal_explaining_2020,tan_counterfactual_2021}, we enforce perplexity-based constraints in a differentiable way by using two surrogate hinge losses:
\begin{align}
    \begin{split}
    \label{eqn:hinge_loss}
    L_{l}(\hat{\vec{y}}) &= \max(\log \frac{\mathrm{ppl}(p(\data{x} | \history))}{\mathrm{ppl}(p(\data{x} | \history_l))} - \log\epsilon_l, 0).
    \end{split} \\
    \begin{split}
    L_{d}(\hat{\vec{y}}) &= \max(\log \frac{\mathrm{ppl}(p(\data{x} | \history_d))}{\mathrm{ppl}(p(\data{x} | \history))} + \log\epsilon_d, 0).
    \end{split}
\end{align}
where the conditional probability distribution \(p(\data{x} | \history)\), \(p(\data{x} | \history_d)\), and \(p(\data{x} | \history_l)\) are estimated using \(\mathcal{M}\).

The loss \(\lossn\) aims to minimize the number of events in \(\history_d\). For each element \(\hat{y}_i\in \hat{\vec{y}}\), if \(\hat{y}_i=1\), the \(i\)-th event is moved from \(\history_l\) to \(\history_d\); otherwise, remains in \(\history_l\). 
So, minimizing the number of events in \(\history_d\) equals to minimizing \(\norml{0}\) of \(\hat{\vec{y}}\). However, the \(\norml{0}\) is not differentiable. As a workaround, some studies optimize the differentiable \(\norml{1}\)~\citep{tan_counterfactual_2021}. Generally, optimizing \(\norml{1}\) of a vector \(\vec{a}\in \set{R}^{d}\) has limited effects on optimizing \(\norml{0}\) because there is no consistent relation between them. \(\norml{0}\) can decrease, stay unchanged, or even increase when \(\norml{1}\) decreases. Fortunately, \(\norml{1}\) of \(\hat{\vec{y}}\) has a consistent relation with \(\norml{0}\) of \(\hat{\vec{y}}\) because the elements in \(\hat{\vec{y}}\) are either \num{0} or \num{1}. This means that \(\norml{0}\) is always equal to \(\norml{1}\) and therefore minimizing \(\hat{\vec{y}}\)'s \(\norml{1}\) is equivalent to minimizing \(\hat{\vec{y}}\)'s \(\norml{0}\). We define \(\lossn\) as the normalized \(\norml{1}\):
\begin{equation}
\label{eqn:l_n}
    \lossn = \frac{\norm{\hat{\vec{y}}}{1}}{|\hat{\vec{y}}|}.
\end{equation}

\mysubsection[CHD]{Inference of \acrshort{model}}



Given history \(\history\) and \(\data{x}\), the \textit{history analyzer} outputs \(p(\vec{y}|\history, \data{x})\) for inference. Then, the \textit{historical event picker} returns \(\history_d\) based on \(p(\vec{y}|\history, \data{x})\). Specifically, the elements \(y_i\in\vec{y}\) are sorted in descending order of \(p(y_i=1|\history, \data{x})\). 
Initially, \(\history_l\) is a copy of \(\history\), \(\history_d=\varnothing\), and \(k=1\). The event corresponding to the top-k element(s) is moved to \(\history_d\) from \(\history_l\). Using \(\mathcal{M}\), \(\mathrm{ppl}(p(\data{x}|\history_l))\) and \(\mathrm{ppl}(p(\data{x}|\history_d))\) are calculated. If the constraints in \cref{eqn:new_optimization_problem} are satisfied, \(\history_d\) is returned. If not, \(k=k+1\) and the same process is taken until the constraints in \cref{eqn:new_optimization_problem} are satisfied and \(\history_d\) is returned.

\mysection{Experiments}

This section
(i) compares our definition of \gls{ehd} with those defined as \gls{ca} and \gls{fa}, respectively, (ii) evaluates the stability of \acrshort{model} under different \((\epsilon_l\), \(\epsilon_d)\) pairs, (iii) checks the explanatory capability of \acrshort{model}-generated \(\history_d\), and (iv) compares \acrshort{model} with baselines for solving \gls{ehd}. We run each experiment five times with different random seeds, and their mean and standard deviation (1-sigma) are reported.

\acrshort{model} can work with any existing black-box \acrshort{mtpp} model \(\mathcal{M}\) 
that provides \(p^*(m, t)\). As a well-studied research problem, the existing state-of-the-art \acrshort{mtpp} models demonstrate comparable performance for predicting the next events~\citep{shchurNeuralTemporalPoint2021}. Without loss of generality, our experiments adopt FullyNN~\citep{omi_fully_2019} because it is a widely accepted \acrshort{mtpp} model for its simplicity, stability, and good performance. More details about FullyNN are in \cref{app:mtpp_model}.

The default hyperparameters of \acrshort{model} including the setting of \(\epsilon_l\) and \(\epsilon_d\) 
are in \cref{app:datasets}. 
We train and evaluate \acrshort{model} and baselines on Xeon Gold 6132 CPUs with 256GB RAM with a V100 GPU.

\partopic{Baseline Models}
\textit{To our knowledge, no previous studies have investigated \gls{ehd}. So, there are no baselines from existing studies to compare with.} Two following baselines are compared: 
\begin{itemize}
    \item \textbf{\acrfull{gs}} is a widely accepted baseline in \gls{ca} research (\eg \citep{ghazimatin_prince_2020,tran_counterfactual_2021}) to evaluate performance. \acrshort{gs} solves \gls{ehd} by incrementally moving from \(\history_l\) (a copy of \(\history\) initially) to \(\history_d\) the event that increases the gap between \(\log\mathrm{ppl}(\data{x}|\history_l)\) and \(\log\mathrm{ppl}(\data{x}|\history_d)\) the most and this process repeats until the two constraints in \cref{eqn:new_optimization_problem} are satisfied.
    \item \textbf{\acrfull{rd}} is another widely accepted baseline in \gls{ca} research (\eg \citep{tan_counterfactual_2021}). \acrshort{rd} randomly moves \(Q\) events from \(\history_l\) (a copy of \(\history\) initially) to \(\history_{d}\) and calculates the gap (i) between \(\log\mathrm{ppl}(\data{x}|\history)\) and \(\log\mathrm{ppl}(\data{x}|\history_l)\) and (ii) between \(\log\mathrm{ppl}(\data{x}|\history)\) and \(\log\mathrm{ppl}(\data{x}|\history_d)\). This is repeated multiple times, and the average for each gap is recorded. \(Q\) starts from 0. \acrshort{rd} stops and returns \(Q\) when the average gap satisfies the two constraints in \cref{eqn:new_optimization_problem}; otherwise, increase \(Q\) by 1 and repeat the previous process. \acrshort{rd} serves as a simple solution showing the performance lower bound. 
\end{itemize}
Existing studies on counterfactual prediction for \acrshort{mtpp} are not baselines because counterfactual prediction and \gls{ehd} are fundamentally different, as stated in \cref{sec:cp}. The brute force is infeasible because solving a combinatorial problem like \gls{ehd} is NP-hard ~\citep{karp_reducibility_1972}.

\partopic{Evaluation Metrics} We quantitatively evaluate the explanatory capability of \(\history_d\) from \acrshort{model} by the distance between real event sequence \(\data{x}\) and the predicted event sequence \(\data{x}^{\prime}\) using the \acrshort{mtpp} model \(\mathcal{M}\) based on \(\history_d\), where \(|\data{x}^{\prime}| = |\data{x}|\). A smaller distance indicates a better \(\history_d\). A widely accepted metric for the distance between two event sequences is the \textit{\acrfull{otd}} \citep{meiImputingMissingEvents2019,panosDecomposableTransformerPoint2024}, which is the minimal cost of aligning \(\data{x}^{\prime}\) to \(\data{x}\) by moving, inserting, and deleting events. We qualitatively evaluate the explanatory capability of \(\history_d\) by checking if the generated explanation is consistent with common sense.
To compare \acrshort{model} with baselines, we consider two metrics following the evaluation protocol in \citep{ghazimatin_prince_2020}: (i) the length of \(\history_d\), while two constraints in \cref{eqn:new_optimization_problem} are satisfied, and (ii) how much time the solver spends to obtain \(\history_d\). For the former, we calculate the average length of \(\history_d\), \(\bar{|\history_d|}\), returned by \acrshort{model} and baselines, respectively.
\begin{equation}
    \bar{|\history_d|} = \frac{1}{|\data{T}|}\sum_{(\history, \data{x})\in \data{T}}{|\history_d|}.
\end{equation}
where \(\data{T}\) is the test dataset. 
Lower \(\bar{|\history_d|}\) indicates better performance. 
For the latter, we record how much time \acrshort{model} and baselines, respectively, spend to obtain \(\history_d\) in all $(\history,\data{x})$ pairs in \(\data{T}\). Lower time means higher processing efficiency.

\partopic{Datasets}\label{sec:datasets}

We test \acrshort{model} and baselines on three popular real-world datasets: \gls{retweet}~\citep{zhao_seismic_2015}, \gls{so}\footnote{\url{https://stackoverflow.com}}~\citep{Leskovec2014SNAPD} and \gls{yelp}\footnote{\url{https://www.yelp.com}}. \cref{tab:dataset_features} reports the statistical information of these datasets. All subsequences with \(n=|\history|+|\data{x}|\) events are extracted from these datasets. Further, each subsequence is split into \(\history\) and \(\data{x}\). Each dataset has 5 different \(|\history|\) and \(|\data{x}|\) settings, which are presented in \cref{tab:hyperparameter_dataset}. Detailed descriptions about these three datasets are available in \cref{app:datasets}.


\begin{table}[!ht]
    \small
    \caption{The statistical information of datasets where the number of sequences, events, and marks are in the first three columns, \(\bar{\tau}\) and \(\sigma(\tau)\) are the mean and standard deviation of the time intervals between adjacent events, \(t_0\) and \(T\) are the earliest start time and the latest end time of all sequences.}
    \centering
    \begin{tabular}{lccc}
        \toprule
                      & {Retweet}   & {StackOverflow} & {Yelp}     \\
        \midrule
        Sequences     & 24000       & 6633          & 4022     \\
        Events        & 2610102     & 480414        & 409946   \\
        Marks         & 3           & 22            & 3        \\
        \(\bar{\tau}\) & 2574        & 0.8747        & 7.2644   \\
        \(\sigma(\tau)\) & 16302       & 1.2091        & 13.410   \\
        \(t_0\)         & 0           & 1324          & 0        \\
        \(T\)           & 604799      & 1390          & 751      \\
        \bottomrule
    \end{tabular}
    \label{tab:dataset_features}
\end{table}
\begin{table}[!ht]
    \small
    \centering
    \caption{Settings of \(|\history|\) and \(|\data{x}|\) in experiments for each dataset.}
    \begin{tabular}{lc}
       \toprule
                     & (\# of events in \(\data{x}\), \# of events in \(\history\)) \\
       \midrule
       Retweet       &          (10, 25), (10, 30), (10, 35), (15, 35), (20, 35)               \\
       StackOverflow &          (15, 40), (15, 45), (15, 50), (20, 50), (25, 50)               \\
       Yelp          &          (10, 25), (10, 30), (10, 35), (15, 35), (20, 35)               \\
       \bottomrule
    \end{tabular}
    \label{tab:hyperparameter_dataset}
    \vskip -0.1in
\end{table}







\begin{figure*}[ht]
    \captionsetup[subfigure]{justification=centering}
    \centering
    \begin{subfigure}{0.2\textwidth}
        \includegraphics[width=\textwidth]{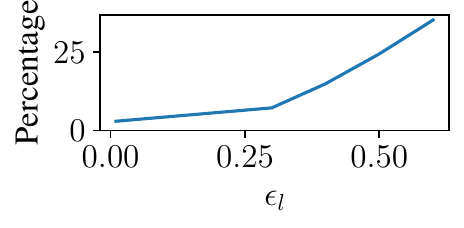}
    \end{subfigure}
    \begin{subfigure}{0.2\textwidth}
        \includegraphics[width=\textwidth]{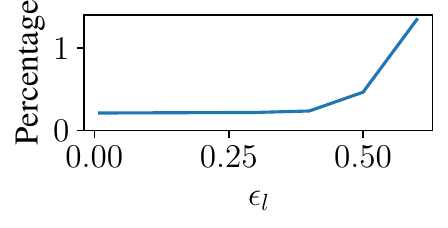}
    \end{subfigure}
    \begin{subfigure}{0.2\textwidth}
        \includegraphics[width=\textwidth]{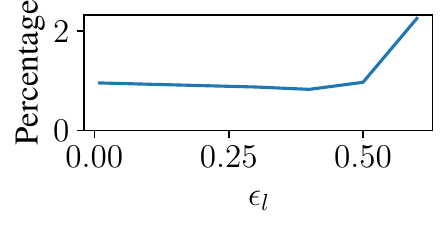}
    \end{subfigure} \\ 
    \begin{subfigure}{0.2\textwidth}
        \includegraphics[width=\textwidth]{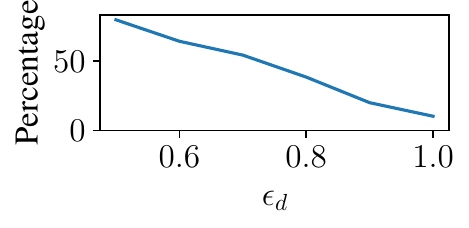}
        \caption{\gls{so} \\ (\( |\data{x}| = 15\), \(|\history| = 40\))}
    \end{subfigure}
    \begin{subfigure}{0.2\textwidth}
        \includegraphics[width=\textwidth]{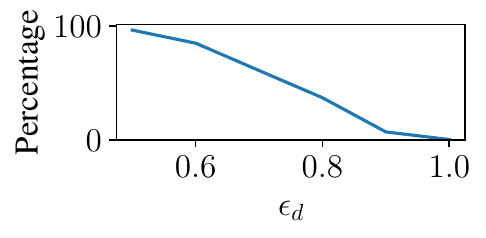}
        \caption{\gls{retweet} \\ (\( |\data{x}| = 10\), \(|\history| = 25 \))}
    \end{subfigure}
    \begin{subfigure}{0.2\textwidth}
        \includegraphics[width=\textwidth]{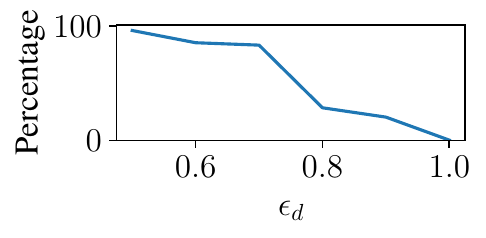}
        \caption{\gls{yelp} \\ (\(|\data{x}| = 10\), \(|\history| = 25\))}
    \end{subfigure}
    \vspace{0.5cm}
    \caption{First row: the percentage of irrational explanations in terms of threshold \(\epsilon_l\) when \gls{ehd} is defined as \gls{ca}. Second row: the percentage of irrational explanations in terms of threshold \(\epsilon_l\) when \gls{ehd} is defined as \gls{fa}.}
    \label{fig:compare_conv_ca}
\end{figure*}
\begin{figure*}[ht]
    \begin{subfigure}{\textwidth}
        \centering
        \begin{subfigure}{0.14\textwidth}
            \includegraphics[width=\textwidth]{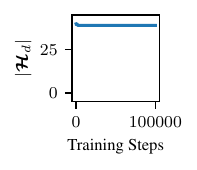}
        \end{subfigure}
        \begin{subfigure}{0.14\textwidth}
            \includegraphics[width=\textwidth]{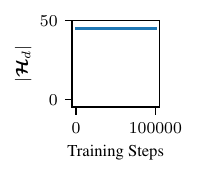}
        \end{subfigure}
        \begin{subfigure}{0.14\textwidth}
            \includegraphics[width=\textwidth]{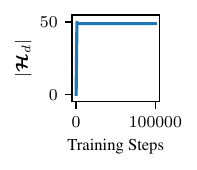}
        \end{subfigure}
        \begin{subfigure}{0.14\textwidth}
            \includegraphics[width=\textwidth]{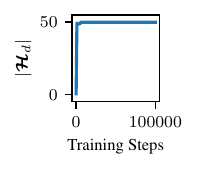}
        \end{subfigure}
        \begin{subfigure}{0.14\textwidth}
            \includegraphics[width=\textwidth]{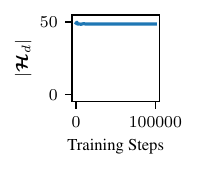}
        \end{subfigure}
        \caption{The number of event in \(\mathcal{H}_{d}\) returned by \acrshort{model} trained by minimizing \(\losse\) only.} 
        \vspace{0.5cm}
        \label{fig:l_te_so}
    \end{subfigure}
    \begin{subfigure}{\textwidth}
        \centering
        \begin{subfigure}{0.14\textwidth}
            \includegraphics[width=\textwidth]{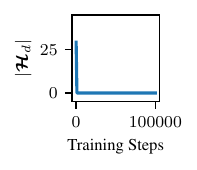}
        \end{subfigure}
        \begin{subfigure}{0.14\textwidth}
            \includegraphics[width=\textwidth]{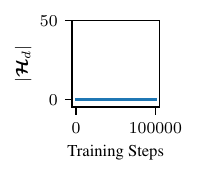}
        \end{subfigure}
        \begin{subfigure}{0.14\textwidth}
            \includegraphics[width=\textwidth]{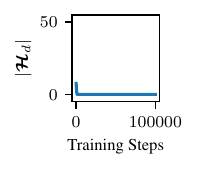}
        \end{subfigure}
        \begin{subfigure}{0.14\textwidth}
            \includegraphics[width=\textwidth]{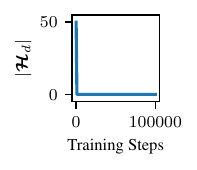}
        \end{subfigure}
        \begin{subfigure}{0.14\textwidth}
            \includegraphics[width=\textwidth]{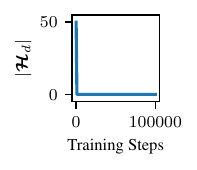}
        \end{subfigure}
        \caption{The number of event in \(\mathcal{H}_{d}\) returned by \acrshort{model} trained by minimizing \(\lossn\) only.} 
        \vspace{0.5cm}
        \label{fig:l_n_so}
    \end{subfigure}
    \caption{Effectiveness of \(\losse\) and \(\lossn\) (from left to right: \((|\data{x}_o|, |\history_f|)\) \(=\) \((15, 40),\) \((15, 45),\) \((15, 50),\) \((20, 50),\) \((25, 50)\)).} 
    \vspace{0.5cm}
    \label{fig:efficiency_so}
\end{figure*}

\mysubsection[EHD vs. Counterfactual Explanation]{Benefit of combining \gls{ca} and \gls{fa}}\label{sec:new_constraints}

If we define \gls{ehd} as \gls{ca} or \gls{fa}, our study shows that it can return irrational explanations, \ie the prediction accuracy of \acrshort{mtpp} based on \(\history_d\) is lower than that based on \(\history_l\), regardless of thresholds. In \cref{fig:compare_conv_ca}, the first row reports the percentage of irrational explanations for different threshold settings \(\epsilon_l\), where \gls{ehd} is defined as \gls{ca}.
The results of the experiment reveal consistent irrational explanations no matter what the value of \(\epsilon_l\) is, particularly for \gls{so}. The second row in \cref{fig:compare_conv_ca} reports the percentage of irrational explanations for different threshold settings \(\epsilon_d\), where \gls{ehd} is defined as \gls{fa}. The result shows that the irrational explanations persist. To avoid such irrational results, it is indispensable to define \gls{ehd} as a combination of \gls{ca} and \gls{fa} and only return \(\history_d\) that meets the constraints of \cref{eqn:new_optimization_problem}. In this way, \(\history_d\) is always rational.

\begin{figure*}[ht]
    \captionsetup[subfigure]{justification=centering}
    \centering
        \begin{subfigure}{0.17\textwidth}
            \includegraphics[width=\textwidth]{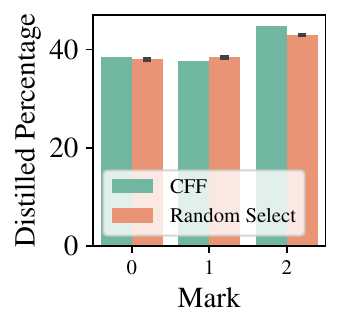}
        \end{subfigure}
        \begin{subfigure}{0.17\textwidth}
            \includegraphics[width=\textwidth]{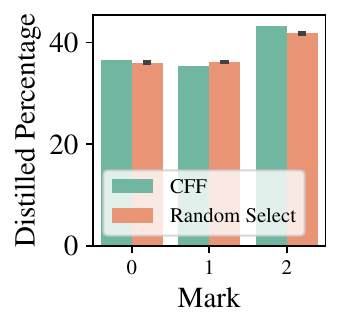}
        \end{subfigure}
        \begin{subfigure}{0.17\textwidth}
            \includegraphics[width=\textwidth]{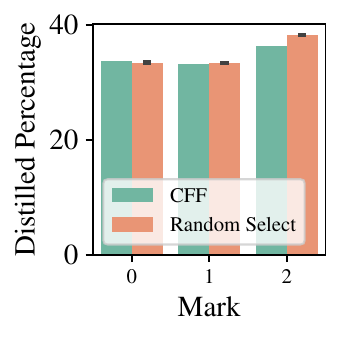}
        \end{subfigure}
        \begin{subfigure}{0.17\textwidth}
            \includegraphics[width=\textwidth]{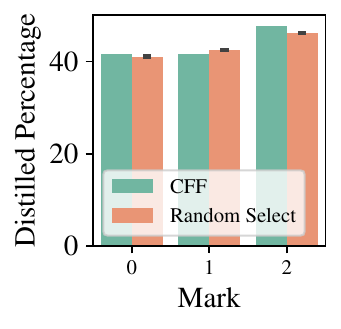}
        \end{subfigure}
        \begin{subfigure}{0.17\textwidth}
            \includegraphics[width=\textwidth]{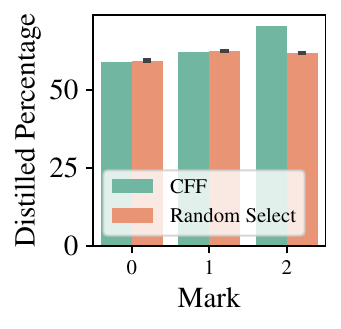}
        \end{subfigure}
    \caption{The percentage of events for different marks in \(\history_d\) returned by \acrshort*{model} and \acrfull*{rd} on test date of Retweet (from left to right: \((|\data{x}|, |\history|)\) \(=\) \((10, 25),\) \((10, 30),\) \((10, 35),\) \((15, 35),\) \((20, 35)\)). All results pass the significance test with p-value 0.}
    \vspace{0.1cm}
    \label{fig:percentage}
\end{figure*}
\begin{figure*}[ht]
    \centering
    \begin{minipage}{0.25\textwidth}
        \captionof{table}{Computation complexity of \acrshort{model}, \acrshort{gs}, and \acrshort{rd}.}
        \begin{tabular}{lc}
            \toprule
            \makecell{Approach\\Name} & \makecell{Computation\\Complexity} \\
            \midrule
            \acrshort{model} & \(O(1)\) \\
            \acrshort{gs} & \(O(n^2)\) \\
            \acrshort{rd} & \(O(kn)\) \\
            \bottomrule
        \end{tabular}
        \label{tab:computation_complexity}
    \end{minipage}
    \hfill
    \begin{minipage}{0.7\textwidth}
        \captionsetup[subfigure]{justification=centering}
        \begin{subfigure}{0.3\textwidth}
            \includegraphics[width=\textwidth]{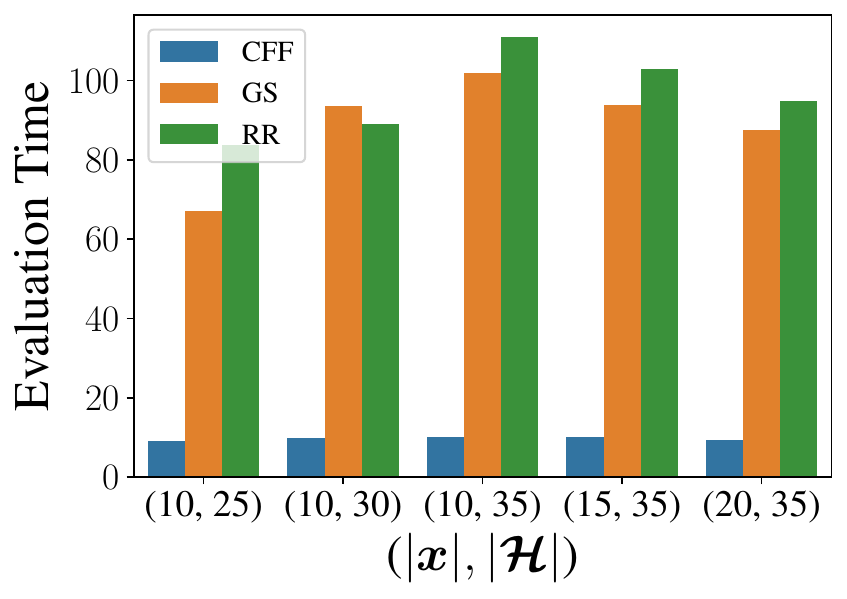}
        \end{subfigure}
        \begin{subfigure}{0.3\textwidth}
            \includegraphics[width=\textwidth]{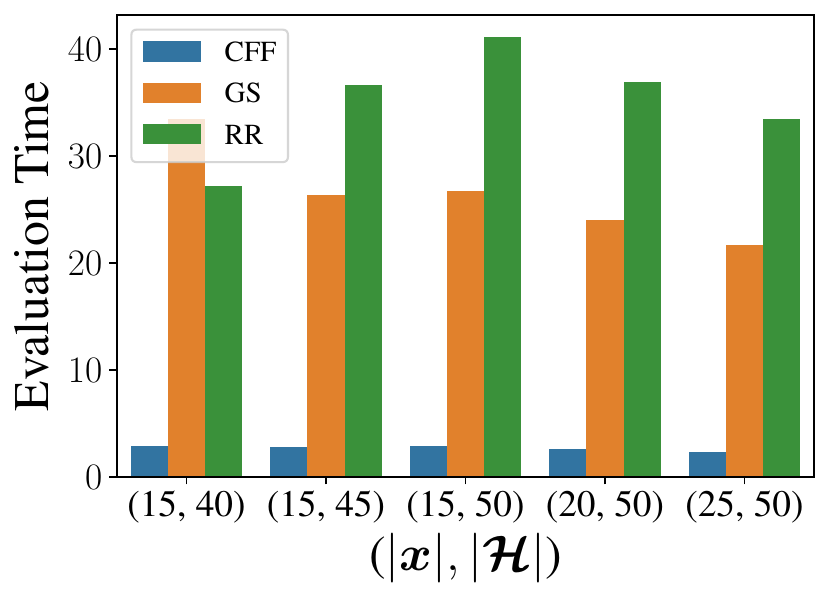}
        \end{subfigure}
        \begin{subfigure}{0.3\textwidth}
            \includegraphics[width=\textwidth]{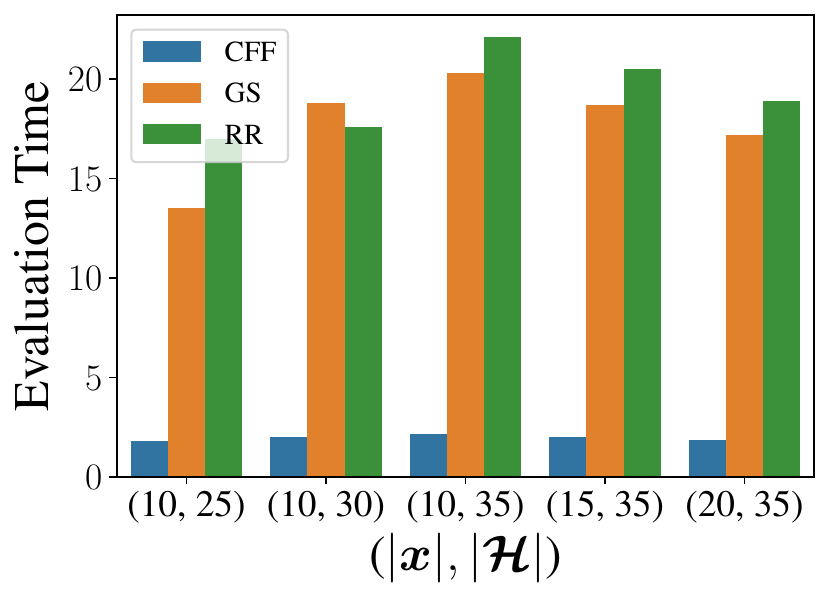}
        \end{subfigure}
    \caption{Total time used (in hours) by \acrshort{model} (Ours), \acrshort{gs}, and \acrshort{rd} to solve all \gls{ehd} tasks in test dataset on \gls{retweet}, \gls{so}, and \gls{yelp} (from left to right). Lower is better.}\label{fig:speed1}
    \vspace{0.25cm}
\end{minipage}
\end{figure*}

\mysubsection[Effectiveness of le and ln]{Effectiveness of \(\losse\) and \(\lossn\)}\label{sec:proof_l_c_and_lossn}

The parameters in \acrshort{model} are trained by minimizing loss \(\losse\) and \(\lossn\).
Minimizing \(\losse\) forces \acrshort{model} to move more events from \(\history_l\) (a copy of \(\history\) initially) to \(\history_{d}\) so that the two constraints in \gls{ehd} are satisfied. On the other hand, minimizing \(\lossn\) encourages \acrshort{model} to move fewer events to \(\history_{d}\) so that \(|\history_{d}|\) is minimized. 
To verify this, \cref{fig:efficiency_so} (a) reports the number of events in \(\history_d\) from the \acrshort{model} trained with \(\losse\) on \gls{so}, and (b) shows the same for the model trained with 
\(\lossn\).
As expected, all events are moved to \(\history_{d}\) in \cref{fig:efficiency_so} (a) while no event is moved to \(\history_{d}\) in \cref{fig:efficiency_so} (b). Similar results can be observed on other datasets in \cref{app:additional_efficiency}.



\mysubsection[Impact of \(\epsilon_l\) and \(\epsilon_d\) Settings]{Impact of \(\epsilon_l\) and \(\epsilon_d\) Settings}\label{sec:thresholds_influence}

The hyperparameters \(\epsilon_l\) and \(\epsilon_d\) in \cref{eqn:new_optimization_problem} give users control over the resulting \(\history_d\). \cref{tab:different_thresholds} shows \(|\history_d|\) under different \((\epsilon_l,\epsilon_d)\) combinations on \gls{so} where (\(|\history| = 50\), \(|\data{x}| = 15\)). For a given \(\epsilon_d=0.9\), the lower bound of prediction accuracy based on \(\history_d\) is fixed; a lower \(\epsilon_l\) forces the prediction accuracy based on \(\history_l\) to decrease, so \(\history_l\) tends to have fewer events, and \(\history_d\) tends to have more events. For a given \(\epsilon_l=0.5\), the upper bound of prediction accuracy based on \(\history_l\) is fixed; a higher \(\epsilon_d\) forces the prediction accuracy based on \(\history_d\) to increase, so \(\history_d\) tends to have more events and \(\history_l\) tends to have fewer events. The results in other datasets show a similar trend. In practice, we find that a high \(\epsilon_d\) like 0.9 with a reasonably low \(\epsilon_l\) like 0.5 or 0.6 leads to \(\history_{d}\) including a reasonably small number of events in history, but essential for an accurate output of \acrshort{mtpp}. This motivates the default setting \((\epsilon_l, \epsilon_d) = (0.5, 0.9)\) for \gls{retweet} and \gls{so} and \((\epsilon_l, \epsilon_d) = (0.6, 0.9)\) for \gls{yelp}.

\begin{table}
    \small
    \centering
    \caption{
        The length of \(\history_d\) obtained by \acrshort{model} under different \((\epsilon_l,\epsilon_d)\). 
        The results demonstrate the impact of \(\epsilon_l\) and \(\epsilon_d\) (\(|\history| = 50\), \(|\data{x}| = 15\)).
        }
    \begin{tabular}{cc|cc}
        \toprule
        (\(\epsilon_l\), \(\epsilon_d\)) & \(|\history_d|\) & (\(\epsilon_l\), \(\epsilon_d\)) & \(|\history_d|\) \\
        \midrule
        (0.01, 0.9) & \meanstd{49.994}{0.0001} & (0.5, 0.51) & \meanstd{14.806}{0.2635} \\
        (0.3, 0.9)  & \meanstd{32.083}{0.2620} & (0.5, 0.7)  & \meanstd{19.034}{0.2061} \\
        (0.4, 0.9)  & \meanstd{29.354}{0.2568} & (0.5, 0.8)  & \meanstd{22.111}{0.6095} \\
        (0.5, 0.9)  & \meanstd{26.115}{0.5226} & (0.5, 0.9)  & \meanstd{26.115}{0.5226} \\
        (0.6, 0.9)  & \meanstd{24.694}{0.1785} & (0.5, 0.99) & \meanstd{34.309}{0.7082} \\
        \bottomrule
    \end{tabular}
    \label{tab:different_thresholds}
    \vskip -0.1in
\end{table}

\mysubsection[Correctness Check of Hd]{Explanatory Capability of \(\history_d\)}\label{sec:case_study}

This section quantitatively and qualitatively evaluates the explanatory capability of \(\history_d\)s generated by \acrshort{model}. 
Specifically, we compare \(\history_d\)s generated by \acrshort{model} against the random subsequences in \(\history\) of the same length as \(\history_d\)s. It should be noted that random subsequences in \(\history\) of the same length as \(\history_d\)s are not required to satisfy the constraints in \cref{{eqn:new_optimization_problem}}, so they are not the solution of \gls{ehd} and the method to obtain them are not baselines.

\partopic{Quantitative Assessment: \acrshort{otd}~\citep{meiImputingMissingEvents2019,panosDecomposableTransformerPoint2024}}

We evaluate the explanatory capability of \(\history_d\) by employing the \acrshort{mtpp} model \(\mathcal{M}\) to predict the next \(|\data{x}|\) events based on \(\history_d\), denoted \(\data{x}^{\prime}\). Then, we measure \acrshort{otd} between \(\data{x}^{\prime}\) and \(\data{x}\). Lower \acrshort{otd} indicates \(\data{x}^{\prime}\) is more similar to \(\data{x}\), so \(\history_d\) is a better explanation. 
As a reference, we also measure \acrshort{otd} where (i) \(\data{x}^{\prime}\) is obtained based on a random subsequence of \(\history\) whose length is equal to \(\history_d\), and (ii) \(\data{x}^{\prime}\) is obtained based on full history \(\history\).
The OTD values are reported in \cref{tab:pred_performance}. We observe that \(\data{x}^{\prime}\)s based on \(\history_d\)s returned by \acrshort{model} are significantly more similar to \(\data{x}\) than those based on random subsequences, as indicated by the mean and standard deviation of \acrshort{otd}. This shows that \(\history_d\)s generated by \acrshort{model} have much more explanatory capability at their length. Interestingly, on some datasets, \eg \gls{yelp} with \((|\data{x}|, |\history|) = (10, 25)\), \(\model\) predicts \(\data{x}\) more accurately based on \(\history_d\) than \(\history\). A possible reason is that \acrshort{model} effectively removes noise from \(\history\) to form \(\history_d\).

\begin{table}[h]
    \small
    \centering
    \caption{The \acrshort{otd} between \(\data{x}\) and \(\data{x}^{\prime}\). \(\data{x}^{\prime}\) is based on \(\history_d\) returned by \acrshort{model}, random select, and full history \(\history\), respectively. Lower is better.}
    \begin{tabular}{lccccc}
    \toprule
    & \(\lvert\data{x}\rvert\) & \(\lvert\history\rvert\) & \acrshort{model} &  Random Select & \makecell{Full History\\(Reference)} \\
    \midrule
        \multirow{5}{*}{\rotatebox[origin=c]{90}{\scriptsize{\gls{so}}}}
                                       & 15 & 40 & \textbf{\meanstd{1.4999}{0.0045}}& \meanstd{1.5531}{0.0047} & \meanstd{1.4701}{0.0043}  \\
                                       & 15 & 45 & \textbf{\meanstd{1.4766}{0.0007}} & \meanstd{1.5381}{0.0028} & \meanstd{1.4341}{0.0004} \\
                                       & 15 & 50 & \textbf{\meanstd{1.4522}{0.0009}} & \meanstd{1.5215}{0.0008} & \meanstd{1.4078}{0.0008} \\
                                       & 20 & 50 & \textbf{\meanstd{1.4421}{0.0021}} & \meanstd{1.5248}{0.0023} & \meanstd{1.3839}{0.0034} \\
                                       & 25 & 50 & \textbf{\meanstd{1.4337}{0.0019}} & \meanstd{1.5270}{0.0028} & \meanstd{1.3757}{0.0035} \\
    \midrule
        \multirow{5}{*}{\rotatebox[origin=c]{90}{\gls{retweet}}} 
                                 & 10 & 25 & \textbf{\meanstd{1.9147}{0.0017}} & \meanstd{1.9333}{0.0016} & \meanstd{1.9115}{0.0005} \\
                                 & 10 & 30 & \textbf{\meanstd{1.9192}{0.0011}} & \meanstd{1.9393}{0.0028} & \meanstd{1.9167}{0.0012} \\
                                 & 10 & 35 & \textbf{\meanstd{1.9263}{0.0006}} & \meanstd{1.9504}{0.0010} & \meanstd{1.9247}{0.0013} \\
                                 & 15 & 35 & \textbf{\meanstd{1.9231}{0.0025}} & \meanstd{1.9434}{0.0021} & \meanstd{1.9206}{0.0009} \\
                                 & 20 & 35 & \textbf{\meanstd{1.9213}{0.0013}} & \meanstd{1.9309}{0.0006} & \meanstd{1.9200}{0.0003} \\
    \midrule
        \multirow{5}{*}{\rotatebox[origin=c]{90}{\gls{yelp}}} 
                             & 10 & 25 & \textbf{\meanstd{1.7362}{0.0006}} & \meanstd{1.7983}{0.0005} & \meanstd{1.7425}{0.0000} \\
                             & 10 & 30 & \textbf{\meanstd{1.7383}{0.0004}} & \meanstd{1.8052}{0.0004} & \meanstd{1.7382}{0.0009} \\
                             & 10 & 35 & \textbf{\meanstd{1.7674}{0.0005}} & \meanstd{1.8155}{0.0002} & \meanstd{1.7264}{0.0003} \\
                             & 15 & 35 & \textbf{\meanstd{1.7317}{0.0008}} & \meanstd{1.8035}{0.0005} & \meanstd{1.7120}{0.0004} \\
                             & 20 & 35 & \textbf{\meanstd{1.7055}{0.0002}} & \meanstd{1.7828}{0.0000} & \meanstd{1.6999}{0.0005} \\
    \bottomrule
    \end{tabular}
    \vskip -0.25in
    \label{tab:pred_performance}
\end{table}

\partopic{Quanlitative Assessment: Mark Percentage}
In the test data of \gls{retweet}, the \(\history_d\)s returned by \acrshort{model} versus those randomly selected are compared in terms of percentage of marks. Retweet logs the retweet activities of regular and famous users on Twitter. The percentage of marks is the ratio between events with one specific mark in \(\history_d\) and those in the corresponding \(\history\). \cref{fig:percentage} presents the mark percentage of \(\history_d\) returned by \acrshort{model} and that of \(\history_d\) selected randomly. For a fair comparison, the length of randomly selected \(\history_d\) is always equal to that of \(\history_d\) by \acrshort{model}. We find that mark 2, representing famous users, is consistently more frequent in \(\history_d\) by \acrshort{model} while other marks are not.
These results confirm that retweets from influential users drive future activity~\citep{aimeurFakeNewsDisinformation2023,baribi-bartovSupersharersFakeNews2024}, indicating that \(\history_d\) by CFF captures key factors. Such comparison is not possible on StackOverflow and Yelp, where marks lack meaningful semantics.

\mysubsection[Comparison with baselines]{Comparison with baselines}\label{sec:audc}
\partopic{Size of \(\history_d\)}

Under the two constraints in \cref{eqn:new_optimization_problem}, the resultant \(\history_d\) with fewer events indicates a better solution. \cref{tab:distill_length} reports the average of \(|\history_d|\) using \acrshort{model} and baselines. First, \acrshort{gs} outperforms \acrshort{rd} by a consistent and noticeable margin on all datasets. 
Second, our \acrshort{model} demonstrates the performance better than both baselines. The experimental results demonstrate that the problem is difficult, as we cannot properly solve it with a simple solution like \acrshort{rd}. The results also demonstrate that our \acrshort{model} works as expected. Looking closely, \acrshort{gs} repeatedly identifies the individual event that affects \(L_e\) the most and moves it from \(\history_l\) to \(\history_d\). This method cannot capture the effect of event combinations in history and may lead to suboptimal solutions. In contrast, our \acrshort{model} overcomes the weakness of \acrshort{gs} by searching for optimal event combinations, demonstrating better performance.


\begin{table}[h]
    \small
    \centering
    \caption{
    The average of \(|\history_d|\) using \acrshort{model} and baselines. The standard deviation of \acrshort{gs} is 0 because \acrshort{gs} is deterministic. 
    }
    \begin{tabular}{lccccc}
    \toprule
    & \(\lvert\data{x}\rvert\) & \(\lvert\history\rvert\) & \acrshort{model} & \acrshort{gs} & \acrshort{rd} \\
    \midrule
        \multirow{5}{*}{\rotatebox[origin=c]{90}{\scriptsize{\gls{so}}}}
                                       & 15 & 40 & \textbf{\meanstd{21.484}{0.0073}} & \meanstd{23.681}{0.0000} & \meanstd{36.424}{0.0033} \\
                                       & 15 & 45 & \textbf{\meanstd{23.700}{0.0802}} & \meanstd{25.700}{0.0000} & \meanstd{40.582}{0.0042} \\
                                       & 15 & 50 & \textbf{\meanstd{26.115}{0.5226}} & \meanstd{27.699}{0.0000} & \meanstd{44.693}{0.0015} \\
                                       & 20 & 50 & \textbf{\meanstd{27.416}{0.0974}} & \meanstd{28.927}{0.0000} & \meanstd{44.898}{0.0046} \\
                                       & 25 & 50 & \textbf{\meanstd{27.811}{0.2973}} & \meanstd{29.636}{0.0000} & \meanstd{45.159}{0.0011} \\
    \midrule
        \multirow{5}{*}{\rotatebox[origin=c]{90}{\gls{retweet}}} 
                                 & 10 & 25 & \textbf{\meanstd{12.281}{0.2001}} & \meanstd{14.722}{0.0000} & \meanstd{24.004}{0.0004} \\
                                 & 10 & 30 & \textbf{\meanstd{13.297}{0.2264}} & \meanstd{16.511}{0.0000} & \meanstd{28.620}{0.0003} \\
                                 & 10 & 35 & \textbf{\meanstd{14.390}{0.0899}} & \meanstd{18.053}{0.0000} & \meanstd{33.207}{0.0018} \\
                                 & 15 & 35 & \textbf{\meanstd{20.632}{0.5377}} & \meanstd{24.875}{0.0000} & \meanstd{34.532}{0.0008} \\
                                 & 20 & 35 & \textbf{\meanstd{28.140}{1.4211}} & \meanstd{29.990}{0.0000} & \meanstd{34.894}{0.0006} \\
    \midrule
        \multirow{5}{*}{\rotatebox[origin=c]{90}{\gls{yelp}}} 
                             & 10 & 25 & \textbf{\meanstd{9.6412}{0.0148}} & \meanstd{11.640}{0.0000} & \meanstd{23.112}{0.5788} \\
                             & 10 & 30 & \textbf{\meanstd{9.8174}{0.0898}} & \meanstd{12.587}{0.0000} & \meanstd{27.396}{0.7311} \\
                             & 10 & 35 & \textbf{\meanstd{10.008}{0.2310}} & \meanstd{13.508}{0.0000} & \meanstd{31.600}{0.9164} \\
                             & 15 & 35 & \textbf{\meanstd{13.422}{0.0436}} & \meanstd{18.237}{0.0000} & \meanstd{33.257}{0.6701} \\
                             & 20 & 35 & \textbf{\meanstd{18.160}{0.4387}} & \meanstd{22.562}{0.0000} & \meanstd{34.114}{0.4259} \\
    \bottomrule
    \end{tabular}
    \label{tab:distill_length}
    \vskip -0.1in
\end{table}



\partopic{Time Efficiency}\label{sec:speed}

Suppose the length of \(\history\) is \(n\). The computation complexity of \acrshort{model}, \acrshort{gs}, and \acrshort{rd} are reported in \cref{tab:computation_complexity}. In one iteration, \acrshort{gs} moves one event from \(\history_l\) to \(\history_d\) that increases \(\log\mathrm{ppl}(\data{x}|\history_l) - \log\mathrm{ppl}(\data{x}|\history_d)\) the most. \acrshort{gs} solves \gls{ehd} by running iterations until the gap satisfies the constraints in \cref{eqn:new_optimization_problem}. The computation complexity in the worst case is \(O(n^2)\). \acrshort{rd} randomly moves \(Q\) events from \(\history_l\) to \(\history_d\) for \(\log\mathrm{ppl}(\data{x}|\history) - \log\mathrm{ppl}(\data{x}|\history_l)\) and \(\log\mathrm{ppl}(\data{x}|\history) - \log\mathrm{ppl}(\data{x}|\history_d)\). This process runs \(k\) times in one iteration for the average of mentioned two gaps. \(Q\) starts from 1. \acrshort{rd} solves \gls{ehd} when the average gap satisfies the constraints in \cref{eqn:new_optimization_problem}, otherwise, we increase \(Q\) by 1 and rerun the iteration. The computation complexity of \acrshort{rd} in the worst case is \(O(kn)\). Our approach obtains \(\history_d\) and \(\history_l\) from \(p(\vec{y}|\history, \data{x})\), which is computed by \acrshort{model} in constant time. So The computation complexity of \acrshort{model} in the worst case is \(O(1)\).


\cref{fig:speed1} reports the total time of our \acrshort{model} and baselines to solve all \gls{ehd} in the test dataset.
The results show that \acrshort{model} is 6-10 times faster than baselines. \acrshort{gs} and \acrshort{rd} have to interact with the \acrshort{mtpp} model multiple times for one \(\history_d\). In contrast, the \acrshort{model} does not need to interact with \acrshort{mtpp} model because it already learned which events should be selected from history during training offline.

\mysection{Conclusions}

This work proposes investigating \gls{ehd}. We find that the obtained explanation is irrational if we define \gls{ehd} as \gls{ca} or \gls{fa}. To overcome the issue, this study proposes to define \gls{ehd} as a combination of \gls{ca} and \gls{fa}. As a combinatorial problem, the optimal solution of \gls{ehd} is intractable. We deliberately design a learning-based solution named \acrfull{model} by probing various combinations of historical events with techniques enabling backpropagation with the Gumbel-softmax trick and disclosing the consistent relation between the differentiable \(\norml{1}\) and non-differentiable \(\norml{0}\) in the context of our problem. The superiority of \acrshort{model} over baselines in terms of explanation quality and processing efficiency has been verified by extensive experiments.



\clearpage


\bibliography{reference.bib}

\begin{thebibliography}{62}
\providecommand{\natexlab}[1]{#1}
\providecommand{\url}[1]{\texttt{#1}}
\expandafter\ifx\csname urlstyle\endcsname\relax
  \providecommand{\doi}[1]{doi: #1}\else
  \providecommand{\doi}{doi: \begingroup \urlstyle{rm}\Url}\fi

\bibitem[Abrate and Bonchi(2021)]{abrate_counterfactual_2021}
C.~Abrate and F.~Bonchi.
\newblock {Counterfactual} {Graphs} for {Explainable} {Classification} of {Brain} {Networks}.
\newblock In \emph{{KDD}}, 2021.

\bibitem[Aimeur et~al.(2023)Aimeur, Amri, and Brassard]{aimeurFakeNewsDisinformation2023}
E.~Aimeur, S.~Amri, and G.~Brassard.
\newblock Fake news, disinformation and misinformation in social media: a review.
\newblock \emph{Social Network Analysis and Mining}, 2023.

\bibitem[Baribi-Bartov et~al.(2024)Baribi-Bartov, Swire-Thompson, and Grinberg]{baribi-bartovSupersharersFakeNews2024}
S.~Baribi-Bartov, B.~Swire-Thompson, and N.~Grinberg.
\newblock Supersharers of fake news on twitter.
\newblock \emph{Science}, 2024.

\bibitem[Barkan et~al.(2024)Barkan, Bogina, Gurevitch, Asher, and Koenigstein]{barkanCounterfactualFrameworkLearning2024}
O.~Barkan, V.~Bogina, L.~Gurevitch, Y.~Asher, and N.~Koenigstein.
\newblock A counterfactual framework for learning and evaluating explanations for recommender systems.
\newblock In \emph{{WWW}}, 2024.

\bibitem[Bengio et~al.(2013)Bengio, Leonard, and Courville]{bengio_estimating_2013}
Y.~Bengio, N.~Leonard, and A.~Courville.
\newblock Estimating or {Propagating} {Gradients} {Through} {Stochastic} {Neurons} for {Conditional} {Computation}.
\newblock In \emph{arXiv:1308.3432}, 2013.

\bibitem[Budhathoki et~al.(2021)Budhathoki, Janzing, Bloebaum, and Ng]{budhathokiWhyDidDistribution2021}
K.~Budhathoki, D.~Janzing, P.~Bloebaum, and H.~Ng.
\newblock Why did the distribution change?
\newblock In \emph{AISTATS}, 2021.

\bibitem[Cai et~al.(2025)Cai, Zhu, Chen, Fang, Wu, Qiao, and Hao]{caiProbabilityNecessitySufficiency2025}
R.~Cai, Y.~Zhu, X.~Chen, Y.~Fang, M.~Wu, J.~Qiao, and Z.~Hao.
\newblock On the probability of necessity and sufficiency of explaining graph neural networks: A lower bound optimization approach.
\newblock \emph{Neural Networks}, 2025.

\bibitem[Chen et~al.(2022)Chen, Silvestri, Wang, Zhang, Huang, Ahn, and Tolomei]{chen_grease_2022}
Z.~Chen, F.~Silvestri, J.~Wang, Y.~Zhang, Z.~Huang, H.~Ahn, and G.~Tolomei.
\newblock {GREASE}: {Generate} {Factual} and {Counterfactual} {Explanations} for {GNN}-based {Recommendations}.
\newblock In \emph{arXiv:2208.04222}, 2022.

\bibitem[Daley and Vere-Jones(2003)]{daley_introduction_2003}
D.~J. Daley and D.~Vere-Jones.
\newblock \emph{An {Introduction} to the {Theory} of {Point} {Processes} {Volume} {I}: {Elementary} {Theory} and {Methods}}.
\newblock Springer, 2003.

\bibitem[Enguehard et~al.(2020)Enguehard, Busbridge, Bozson, Woodcock, and Hammerla]{enguehardNeuralTemporalPoint2020}
J.~Enguehard, D.~Busbridge, A.~Bozson, C.~Woodcock, and N.~Hammerla.
\newblock {Neural} {Temporal} {Point} {Processes} for {Modelling} {Electronic} {Health} {Records}.
\newblock In \emph{The Machine Learning for Health {NeurIPS} Workshop}, 2020.

\bibitem[Fernandez et~al.(2022)Fernandez, Aledo, Gamez, and Puerta]{fernandezFactualCounterfactualExplanations2022}
G.~Fernandez, J.~A. Aledo, J.~A. Gamez, and J.~M. Puerta.
\newblock Factual and counterfactual explanations in fuzzy classification trees.
\newblock \emph{{IEEE} Transactions on Fuzzy Systems}, 2022.

\bibitem[Ferraro(2009)]{ferraro2009counterfactual}
P.~J. Ferraro.
\newblock {Counterfactual} {Thinking} and {Impact} {Evaluation} in {Environmental} {Policy}.
\newblock \emph{New directions for evaluation}, 2009.

\bibitem[Gao et~al.(2021)Gao, Subramanian, Bhattacharjya, Shou, Mattei, and Bennett]{gao2021causal}
T.~Gao, D.~Subramanian, D.~Bhattacharjya, X.~Shou, N.~Mattei, and K.~Bennett.
\newblock {Causal} {Inference} for {Event} {Pairs} in {Multivariate} {Point} {Processes}.
\newblock In \emph{NeurIPS}, 2021.

\bibitem[Gauch(2003)]{gauch2003scientific}
H.~G. Gauch.
\newblock \emph{Scientific method in practice}.
\newblock Cambridge University Press, 2003.

\bibitem[Ge et~al.(2024)Ge, Liu, Fu, Tan, Li, Xu, Li, Xian, and Zhang]{geSurveyTrustworthyRecommender2024}
Y.~Ge, S.~Liu, Z.~Fu, J.~Tan, Z.~Li, S.~Xu, Y.~Li, Y.~Xian, and Y.~Zhang.
\newblock A survey on trustworthy recommender systems.
\newblock \emph{{ACM} Trans. Recomm. Syst.}, 2024.

\bibitem[Ghazimatin et~al.(2020)Ghazimatin, Balalau, Roy, and Weikum]{ghazimatin_prince_2020}
A.~Ghazimatin, O.~Balalau, R.~S. Roy, and G.~Weikum.
\newblock {PRINCE:} {Provider}-side {Interpretability} with {Counterfactual} {Explanations} in {Recommender} {Systems}.
\newblock In \emph{{WSDM}}, 2020.

\bibitem[Goyal et~al.(2019)Goyal, Wu, Ernst, Batra, Parikh, and Lee]{goyal2019counterfactual}
Y.~Goyal, Z.~Wu, J.~Ernst, D.~Batra, D.~Parikh, and S.~Lee.
\newblock {Counterfactual} {Visual} {Explanations}.
\newblock In \emph{ICML}, 2019.

\bibitem[Guidotti(2022)]{guidotti_counterfactual_2022}
R.~Guidotti.
\newblock {Counterfactual} {Explanations} and {How} to {Find} {Them}: {Literature} {Review} and {Benchmarking}.
\newblock \emph{DMKD}, 2022.

\bibitem[Hizli et~al.(2023)Hizli, John, Juuti, Saarinen, Pietil{\"{a}}inen, and Marttinen]{Hizli2023}
{\c{C}}.~Hizli, S.~John, A.~Juuti, T.~Saarinen, K.~Pietil{\"{a}}inen, and P.~Marttinen.
\newblock Temporal {Causal} {Mediation} through a {Point} {Process}: {Direct} and {Indirect} {Effects} of {Healthcare} {Interventions}.
\newblock In \emph{NeurIPS}, 2023.

\bibitem[Huang and Chang(2023)]{huangReasoningLargeLanguage2023}
J.~Huang and K.~C.-C. Chang.
\newblock Towards reasoning in large language models: A survey.
\newblock In \emph{{ACL} 2023}, 2023.

\bibitem[Ide et~al.(2021)Ide, Kollias, Phan, and Abe]{ideCardinalityRegularizedHawkesGrangerModel2021}
T.~Ide, G.~Kollias, D.~T. Phan, and N.~Abe.
\newblock Cardinality-regularized hawkes-granger model.
\newblock In \emph{NeurIPS}, 2021.

\bibitem[Karlsson et~al.(2018)Karlsson, Rebane, Papapetrou, and Gionis]{karlssonExplainableTimeSeries2018}
I.~Karlsson, J.~Rebane, P.~Papapetrou, and A.~Gionis.
\newblock Explainable time series tweaking via irreversible and reversible temporal transformations.
\newblock In \emph{{ICDM}}, 2018.

\bibitem[Karp(1972)]{karp_reducibility_1972}
R.~M. Karp.
\newblock Reducibility among {Combinatorial} {Problems}.
\newblock In \emph{Complexity of {Computer} {Computations}}. 1972.

\bibitem[Lash et~al.(2017)Lash, Lin, Street, Robinson, and Ohlmann]{lash2017generalized}
M.~T. Lash, Q.~Lin, N.~Street, J.~G. Robinson, and J.~Ohlmann.
\newblock Generalized {Inverse} {Classification}.
\newblock In \emph{SIAM}, 2017.

\bibitem[Lei et~al.(2016)Lei, Barzilay, and Jaakkola]{lei_rationalizing_2016}
T.~Lei, R.~Barzilay, and T.~Jaakkola.
\newblock Rationalizing {Neural} {Predictions}.
\newblock In \emph{EMNLP}, 2016.

\bibitem[Leskovec and Krevl(2014)]{Leskovec2014SNAPD}
J.~Leskovec and A.~Krevl.
\newblock {SNAP Datasets}: {Stanford} {Large} {Network} {Dataset} {Collection}.
\newblock \url{http://snap.stanford.edu/data}, 2014.

\bibitem[Li et~al.(2022)Li, Feng, Wang, Essofi, Cao, Yan, and Song]{liExplainingPointProcesses2021}
S.~Li, M.~Feng, L.~Wang, A.~Essofi, Y.~Cao, J.~Yan, and L.~Song.
\newblock Explaining point processes by learning interpretable temporal logic rules.
\newblock In \emph{ICLR}, 2022.

\bibitem[Li et~al.(2023)Li, Chen, Xu, Ge, Tan, Liu, and Zhang]{liFairnessRecommendationFoundations2023}
Y.~Li, H.~Chen, S.~Xu, Y.~Ge, J.~Tan, S.~Liu, and Y.~Zhang.
\newblock Fairness in recommendation: Foundations, methods, and applications.
\newblock \emph{{ACM} Trans. Intell. Syst. Technol.}, 2023.

\bibitem[Lin et~al.(2021)Lin, Lan, and Li]{linGenerativeCausalExplanations2021}
W.~Lin, H.~Lan, and B.~Li.
\newblock Generative causal explanations for graph neural networks.
\newblock In \emph{ICML}, 2021.

\bibitem[Liu et~al.(2021)Liu, Chen, Liu, Zhang, and Xie]{liuMultiobjectiveExplanationsGNN2021}
Y.~Liu, C.~Chen, Y.~Liu, X.~Zhang, and S.~Xie.
\newblock Multi-objective explanations of {GNN} predictions.
\newblock In \emph{ICDM}, 2021.

\bibitem[Maddison et~al.(2017)Maddison, Mnih, and Teh]{maddison_concrete_2016}
C.~J. Maddison, A.~Mnih, and Y.~W. Teh.
\newblock The {Concrete} {Distribution}: {A} {Continuous} {Relaxation} of {Discrete} {Random} {Variables}.
\newblock In \emph{{ICLR}}, 2017.

\bibitem[Martens and Provost(2014)]{10.25300/MISQ/2014/38.1.04}
D.~Martens and F.~Provost.
\newblock Explaining {Data}-driven {Document} {Classifications}.
\newblock \emph{MIS Q.}, 2014.

\bibitem[Mei et~al.(2019)Mei, Qin, and Eisner]{meiImputingMissingEvents2019}
H.~Mei, G.~Qin, and J.~Eisner.
\newblock Imputing missing events in continuous-time event streams.
\newblock In \emph{ICML}, 2019.

\bibitem[Mothilal et~al.(2020)Mothilal, Sharma, and Tan]{mothilal_explaining_2020}
R.~K. Mothilal, A.~Sharma, and C.~Tan.
\newblock Explaining {Machine} {Learning} {Classifiers} through {Diverse} {Counterfactual} {Explanations}.
\newblock In \emph{{FAT}}, 2020.

\bibitem[Mu et~al.(2022)Mu, Li, Zhao, Wang, Ding, and Wen]{mu_alleviating_2022}
S.~Mu, Y.~Li, W.~X. Zhao, J.~Wang, B.~Ding, and J.-R. Wen.
\newblock Alleviating {Spurious} {Correlations} in {Knowledge}-aware {Recommendations} through {Counterfactual} {Generator}.
\newblock In \emph{{SIGIR}}, 2022.

\bibitem[Noorbakhsh and Rodriguez(2022)]{noorbakhsh_counterfactual_2022}
K.~Noorbakhsh and M.~G. Rodriguez.
\newblock Counterfactual {Temporal} {Point} {Processes}.
\newblock In \emph{NeurIPS}, 2022.

\bibitem[Omi et~al.(2019)Omi, Ueda, and Aihara]{omi_fully_2019}
T.~Omi, N.~Ueda, and K.~Aihara.
\newblock Fully {Neural} {Network} based {Model} for {General} {Temporal} {Point} {Processes}.
\newblock In \emph{NeurIPS}, 2019.

\bibitem[Panos(2024)]{panosDecomposableTransformerPoint2024}
A.~Panos.
\newblock Decomposable transformer point processes.
\newblock In \emph{NeurIPS}, 2024.

\bibitem[Parmentier and Vidal(2021)]{parmentier_optimal_2021}
A.~Parmentier and T.~Vidal.
\newblock Optimal {Counterfactual} {Explanations} in {Tree} {Ensembles}.
\newblock In \emph{{ICML}}, 2021.

\bibitem[Pearl(2009)]{pearl_causality_2009}
J.~Pearl.
\newblock \emph{Causality}.
\newblock Cambridge University Press, 2009.

\bibitem[Prado-Romero et~al.(2024)Prado-Romero, Prenkaj, Stilo, and Giannotti]{prado-romeroSurveyGraphCounterfactual2024}
M.~A. Prado-Romero, B.~Prenkaj, G.~Stilo, and F.~Giannotti.
\newblock A survey on graph counterfactual explanations: Definitions, methods, evaluation, and research challenges.
\newblock \emph{{ACM} Comput. Surv.}, 2024.

\bibitem[Prosperi et~al.(2020)Prosperi, Guo, Sperrin, Koopman, Min, He, Rich, Wang, Buchan, and Bian]{prosperi2020causal}
M.~Prosperi, Y.~Guo, M.~Sperrin, J.~S. Koopman, J.~S. Min, X.~He, S.~Rich, M.~Wang, I.~E. Buchan, and J.~Bian.
\newblock {Causal} {Inference} and {Counterfactual} {Prediction} in {Machine} {Learning} for {Actionable} {Healthcare}.
\newblock \emph{Nature Machine Intelligence}, 2\penalty0 (7):\penalty0 369--375, 2020.

\bibitem[Ramakrishnan et~al.(2020)Ramakrishnan, Lee, and Albarghouthi]{ramakrishnan_synthesizing_2020}
G.~Ramakrishnan, Y.~C. Lee, and A.~Albarghouthi.
\newblock {Synthesizing} {Action} {Sequences} for {Modifying} {Model} {Decisions}.
\newblock In \emph{{AAAI}}, 2020.

\bibitem[Sahoh and Choksuriwong(2022)]{sahohRoleExplainableArtificial2023}
B.~Sahoh and A.~Choksuriwong.
\newblock The {Role} of {Explainable} {Artificial} {Intelligence} in {High}-stakes {Decision}-making {Systems}: a {Systematic} {Review}.
\newblock \emph{J. Ambient Intell. Humanized Comput.}, 2022.

\bibitem[Schulam and Saria(2017)]{schulam2017reliable}
P.~Schulam and S.~Saria.
\newblock {Reliable} {Decision} {Support} using {Counterfactual} {Models}.
\newblock \emph{NeurIPS}, 2017.

\bibitem[Shchur et~al.(2021)Shchur, Türkmen, Januschowski, and Günnemann]{shchurNeuralTemporalPoint2021}
O.~Shchur, A.~C. Türkmen, T.~Januschowski, and S.~Günnemann.
\newblock Neural temporal point processes: A review.
\newblock In \emph{IJCAI}, 2021.

\bibitem[Song et~al.(2024)Song, Yang, Wang, An, and Li]{songLatentLogicTree2024}
Z.~Song, C.~Yang, C.~Wang, B.~An, and S.~Li.
\newblock Latent logic tree extraction for event sequence explanation from {LLMs}.
\newblock In \emph{{arXiv}:2406.01124}, 2024.

\bibitem[Tan et~al.(2021)Tan, Xu, Ge, Li, Chen, and Zhang]{tan_counterfactual_2021}
J.~Tan, S.~Xu, Y.~Ge, Y.~Li, X.~Chen, and Y.~Zhang.
\newblock Counterfactual {Explainable} {Recommendation}.
\newblock In \emph{CIKM}, 2021.

\bibitem[Tan et~al.(2022)Tan, Geng, Fu, Ge, Xu, Li, and Zhang]{tan_learning_2022}
J.~Tan, S.~Geng, Z.~Fu, Y.~Ge, S.~Xu, Y.~Li, and Y.~Zhang.
\newblock Learning and {Evaluating} {Graph} {Neural} {Network} {Explanations} based on {Counterfactual} and {Factual} {Reasoning}.
\newblock In \emph{{WWW}}, 2022.

\bibitem[Tran et~al.(2021)Tran, Ghazimatin, and Saha~Roy]{tran_counterfactual_2021}
K.~H. Tran, A.~Ghazimatin, and R.~Saha~Roy.
\newblock Counterfactual {Explanations} for {Neural} {Recommenders}.
\newblock In \emph{{SIGIR}}, 2021.

\bibitem[Verma et~al.(2020)Verma, Boonsanong, Hoang, Hines, Dickerson, and Shah]{verma_counterfactual_2022}
S.~Verma, V.~Boonsanong, M.~Hoang, K.~Hines, J.~Dickerson, and C.~Shah.
\newblock Counterfactual {Explanations} and {Algorithmic} {Recourses} for {Machine} {Learning}: A {Review}.
\newblock \emph{ACM Computing Surveys}, 2020.

\bibitem[Wu et~al.(2024)Wu, Ide, Kollias, Navratil, Lozano, Abe, Ma, and Yu]{wuLearningGrangerCausality2024}
D.~Wu, T.~Ide, G.~Kollias, J.~Navratil, A.~Lozano, N.~Abe, Y.~Ma, and R.~Yu.
\newblock Learning granger causality from instance-wise self-attentive hawkes processes.
\newblock In \emph{AISTATS}, 2024.

\bibitem[Xu et~al.(2022)Xu, Yu, Zhang, Ali, Ho, and Yang]{xu_counterfactual_2022}
R.~Xu, Y.~Yu, C.~Zhang, M.~K. Ali, J.~C. Ho, and C.~Yang.
\newblock Counterfactual and {Factual} {Reasoning} over {Hypergraphs} for {Interpretable} {Clinical} {Predictions} on {EHR}.
\newblock In \emph{MLHS}, 2022.

\bibitem[Xu et~al.(2024)Xu, Lamba, Ai, Tetreault, and Jaimes]{xuCFE2CounterfactualEditing2024a}
Z.~Xu, H.~Lamba, Q.~Ai, J.~Tetreault, and A.~Jaimes.
\newblock {CFE}2: Counterfactual editing for search result explanation.
\newblock In \emph{SIGIR}, 2024.

\bibitem[Yang et~al.(2024)Yang, Yang, Li, Fu, and Li]{yangNeuroSymbolicTemporalPoint2024}
Y.~Yang, C.~Yang, B.~Li, Y.~Fu, and S.~Li.
\newblock Neuro-symbolic temporal point processes, 2024.

\bibitem[Zhang et~al.(2020)Zhang, Lipani, Kirnap, and Yilmaz]{zhang_self-attentive_2020}
Q.~Zhang, A.~Lipani, {\"{O}}.~Kirnap, and E.~Yilmaz.
\newblock Self-attentive {Hawkes} {Process}.
\newblock In \emph{{ICML}}, 2020.

\bibitem[Zhang et~al.(2021{\natexlab{a}})Zhang, Lipani, and Yilmaz]{zhangLearningNeuralPoint2021}
Q.~Zhang, A.~Lipani, and E.~Yilmaz.
\newblock Learning neural point processes with latent graphs.
\newblock In \emph{WWW}, 2021{\natexlab{a}}.

\bibitem[Zhang et~al.(2023)Zhang, Zhang, Song, Adeshina, Zheng, Faloutsos, and Sun]{zhang_page-link_2023}
S.~Zhang, J.~Zhang, X.~Song, S.~Adeshina, D.~Zheng, C.~Faloutsos, and Y.~Sun.
\newblock {PaGE}-{Link}: {Path}-based {Graph} {Neural} {Network} {Explanation} for {Heterogeneous} {Link} {Prediction}.
\newblock In \emph{{WWW}}, 2023.

\bibitem[Zhang et~al.(2021{\natexlab{b}})Zhang, Sharma, and Liu]{zhang_vigdet_2021}
Y.~Zhang, K.~Sharma, and Y.~Liu.
\newblock {VigDet}: {Knowledge} {Informed} {Neural} {Temporal} {Point} {Process} for {Coordination} {Detection} on {Social} {Media}.
\newblock In \emph{NeurIPS}, 2021{\natexlab{b}}.

\bibitem[Zhang et~al.(2022)Zhang, Cao, and Liu]{zhang_counterfactual_2022}
Y.~Zhang, D.~Cao, and Y.~Liu.
\newblock Counterfactual {Neural} {Temporal} {Point} {Process} for {Estimating} {Causal} {Influence} of {Misinformation} on {Social} {Media}.
\newblock In \emph{NeurIPS}, 2022.

\bibitem[Zhao et~al.(2024)Zhao, Chen, Yang, Liu, Deng, Cai, Wang, Yin, and Du]{zhaoExplainabilityLargeLanguage2024}
H.~Zhao, H.~Chen, F.~Yang, N.~Liu, H.~Deng, H.~Cai, S.~Wang, D.~Yin, and M.~Du.
\newblock Explainability for large language models: A survey.
\newblock \emph{{ACM} Trans. Intell. Syst. Technol.}, 2024.

\bibitem[Zhao et~al.(2015)Zhao, Erdogdu, He, Rajaraman, and Leskovec]{zhao_seismic_2015}
Q.~Zhao, M.~A. Erdogdu, H.~Y. He, A.~Rajaraman, and J.~Leskovec.
\newblock {SEISMIC}: {A} {Self}-{Exciting} {Point} {Process} {Model} for {Predicting} {Tweet} {Popularity}.
\newblock In \emph{{SIGKDD}}, 2015.

\end{thebibliography}

\clearpage
\appendix

\mysection{Experiment Details}\label{app:model_training}

\mysubsection{MTPP Model}\label{app:mtpp_model}
\acrshort{model} can work with any \acrshort{mtpp} models that provide \(p^*(m, t)\). Without loss of generality, this study uses FullyNN~\citep{omi_fully_2019}. \cref{tab:fullynn_hyperparameters} presents the hyperparameters used for training the FullyNN on Retweet, StackOverflow, and Yelp.

FullyNN estimates the integral of conditional intensity functions \(\Lambda^*(m, t) = \int_{t_l}^{t}{\lambda^*(m, \tau)d\tau}\) and calculates the value of the intensity function at time \(t\) from the gradient of \(\Lambda^*(m, t)\):

\begin{align}
    \Lambda^*(m, t) &= \int_{t_l}^{t}{\lambda^*(m, \tau)d\tau} = \mathrm{FullyNN}(m, t) \\
    \lambda^*(m, t) &= \frac{\partial \Lambda^*(m, t)}{\partial t} = \frac{\partial \mathrm{FullyNN}(m, t)}{\partial t} \\
    p^*(m, t) & = \lambda^*(m, t)\exp{-\Lambda^*(t)} \\
    &= \frac{\partial \mathrm{FullyNN}(m, t)}{\partial t}\exp{-\sum_{n \in \set{M}}{\mathrm{FullyNN}(n, t)}}
\end{align}
This helps FullyNN elude calculating \(\Lambda^*(m, t)\) by numerical integration methods, such as Monte Carlo integration, to predict \acrshort{mtpp} faster and more accurately. The FullyNN is trained on NVIDIA A100 GPUs.


\begin{table}[!ht]
    \small
    \centering
    \caption{Hyperparamters settings for training \acrshort{mtpp} Models.}
    \begin{tabular}{l
        S[table-format=6.3]
        S[table-format=6.3]
        S[table-format=6.3]
        }
    \toprule
                     & {Retweet}  & {StackOverflow} & {Yelp} \\
    \midrule
    {\#Training Steps}   & 400000   & 200000 & 200000   \\
    {\#Warmup Steps}     & 80000    & 40000  & 40000    \\
    Batch Size       & 32       & 32     & 32       \\
    History Embedding& 32       & 32     & 32       \\
    Optimizer        & {AdamW}  & {AdamW}& {AdamW}  \\
    Intensity Vector & 16       & 32     & 16       \\
    Learning Rate    & 0.002    & 0.002  & 0.002    \\
    Layers           & 4        & 2      & 4        \\
    \bottomrule
    \end{tabular}
    \label{tab:fullynn_hyperparameters}
\end{table}

\mysubsection{Datasets}
\label{app:datasets}



We train and evaluate \acrshort{model} against baselines on three real-world datasets.

\textit{\gls{retweet}}~\citep{zhao_seismic_2015} records when users retweet a particular message on Twitter. The mark of this dataset distinguishes all users into three types. Mark 0 refers to the normal user, whose follower count is lower than the overall median. Mark 1 refers to the influential user, whose follower count is higher than the median but lower than the top-\num{5}{\%} of the entire user base. Mark 2 refers to the famous user, whose follower count is in the top-\num{5}{\%} of the entire user base. 

\textit{\gls{so}}~\citep{Leskovec2014SNAPD} was collected from Stackoverflow\footnote{\url{https://stackoverflow.com}}, a question-answering website. Users providing decent answers will receive different badges as rewards. We have 22 marks in this dataset, representing 22 different badges that users can receive for their answers.

\textit{\gls{yelp}}\footnote{\url{https://www.yelp.com}} contains the reviews of restaurants, shopping centers, and stores in the US on Yelp.
We categorize these reviews into three groups based on the reviewers. Mark 0 refers to the normal reviewer. The number of reviews a normal reviewer has is lower than the overall median, which is 5 reviews in our case. Mark 1 refers to the influential reviewers. These reviewers write more reviews than normal reviewers but less than the top-\num{5}{\%} reviewers. Mark 2 refers to the famous reviewers, the top-\num{5}{\%} reviewers who write more than 92 reviews.

\cref{tab:dataset_size} reports the number of events in training, validation, and test datasets for different settings of \(|\history|\) and \(|\data{x}|\) on dataset \gls{retweet}, \gls{so}, and \gls{yelp}. \cref{tab:ehd_hyperparameters} presents the hyperparameters used for training the \acrshort{model}. 

\begin{table}[!ht]
    \small
    \centering
    \caption{The number of events in training, validation, and test dataset for different settings of \(|\history|\) and \(|\data{x}|\).}
    \begin{tabular}{lc
        S[table-format = 7]
        S[table-format = 6]
        S[table-format = 6]
        }
        \toprule
                     &  (\(\data{x}\),\(\history\)) & {training} & {validation} & {test} \\
        \midrule
        \multirow{5}{*}{Retweet} & (10, 25)  & 1476116 & 145521 & 148465  \\
                                 & (10, 30)  & 1376116 & 135521 & 135521  \\
                                 & (10, 35)  & 1276116 & 125521 & 128465  \\
                                 & (15, 35)  & 1176383 & 115551 & 118497  \\
                                 & (20, 35)  & 1081289 & 106047 & 108970  \\
        \midrule
        \multirow{5}{*}{StackOverflow} & (15, 40) & 99791 & 10826 & 29232 \\
                                       & (15, 45) & 87623 & 9451 & 25824  \\
                                       & (15, 50) & 77341 & 8307 & 22951  \\
                                       & (20, 50) & 68635 & 7350 & 20504  \\
                                       & (25, 50) & 61254 & 6512 & 18385  \\
        \midrule
        \multirow{5}{*}{Yelp} & (10, 25) & 213677 & 25937 & 29562 \\
                              & (10, 30) & 197622 & 23952 & 27492  \\
                              & (10, 35) & 181567 & 21967 & 25422  \\
                              & (15, 35) & 165587 & 19996 & 23359  \\
                              & (20, 35) & 150640 & 18157 & 21406  \\
        \bottomrule
    \end{tabular}
    \label{tab:dataset_size}
\end{table}


\begin{table}[!ht]
    \small
    \centering
    \caption{Hyperparamters settings for training \acrshort{model}.}
    \begin{tabular}{l
        S[table-format=6.3]
        S[table-format=6.3]
        S[table-format=6.3]
        }
    \toprule
                      & {Retweet}  & {StackOverflow} & {Yelp} \\
    \midrule
    {\#Training Steps}& 100000          & 100000          & 100000          \\
    {\#Warmup Steps}  & 5000            & 5000            & 5000            \\
        Batch Size    & 256             & 128             & 128             \\
        Hidden Vector & 64              & 64              & 64              \\
        Input Vector  & 32              & 32              & 32              \\
          Q, K, V     & 32              & 32              & 32              \\
          Head        & 4               & 4               & 4               \\
          N           & 4               & 4               & 4               \\
          M           & 4               & 4               & 4               \\
        Learning Rate & 0.001           & 0.001           & 0.001           \\
        \(\epsilon_l\)& 0.5             & 0.5             & 0.6             \\
        \(\epsilon_d\)& 0.9             & 0.9             & 0.9             \\
        \(\alpha\)    & 1.0             & 1.0             & 1.0             \\
        \(\beta\)     & 1.0             & 1.0             & 1.0             \\
    \bottomrule
    \end{tabular}
    \label{tab:ehd_hyperparameters}
\end{table}

\mysection{Additional Experiment Results}


\mysubsection[Effectiveness of losse and lossn]{Effectiveness of CE-MTPP - Effectiveness of \(\losse\) and \(\lossn\)}\label{app:additional_efficiency}
\cref{sec:proof_l_c_and_lossn} demonstrate that minimizing loss \(\losse\) leads to \(\history_d\) with fewer events and minimizing loss \(L_n\) leads to \(\history_d\) with more events, respectively, on \gls{so}. The results on \gls{retweet} and \gls{yelp} are reported in \cref{fig:efficiency_retweet} and \cref{fig:efficiency_yelp}, respectively. They are consistent with the results in \cref{sec:proof_l_c_and_lossn}. 




\begin{figure*}[!ht]
    \centering
    \begin{subfigure}{\textwidth}
        \begin{subfigure}{0.2\textwidth}
            \includegraphics[width=\textwidth]{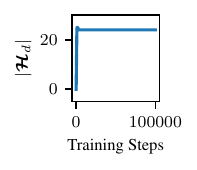}
        \end{subfigure}\hspace{-0.25cm}
        \begin{subfigure}{0.2\textwidth}
            \includegraphics[width=\textwidth]{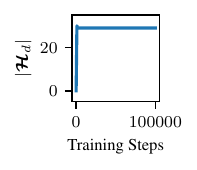}
        \end{subfigure}\hspace{-0.25cm}
        \begin{subfigure}{0.2\textwidth}
            \includegraphics[width=\textwidth]{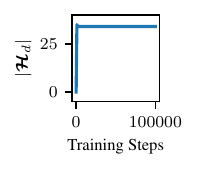}
        \end{subfigure}\hspace{-0.25cm}
        \begin{subfigure}{0.2\textwidth}
            \includegraphics[width=\textwidth]{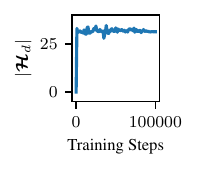}
        \end{subfigure}\hspace{-0.25cm}
        \begin{subfigure}{0.2\textwidth}
            \includegraphics[width=\textwidth]{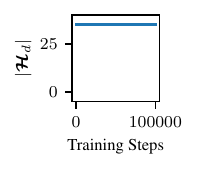}
        \end{subfigure}\hspace{-0.25cm}
        \caption{The number of event in \(\mathcal{H}_{d}\) returned by \acrshort{model} trained by minimizing \(\losse\) only.} 
        \vspace{0.5cm}
        \label{fig:l_te_retweet}
    \end{subfigure}
    \begin{subfigure}{\textwidth}
        \begin{subfigure}{0.2\textwidth}
            \includegraphics[width=\textwidth]{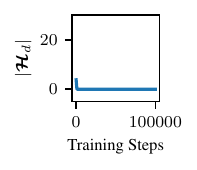}
        \end{subfigure}\hspace{-0.25cm}
        \begin{subfigure}{0.2\textwidth}
            \includegraphics[width=\textwidth]{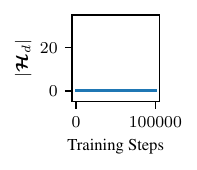}
        \end{subfigure}\hspace{-0.25cm}
        \begin{subfigure}{0.2\textwidth}
            \includegraphics[width=\textwidth]{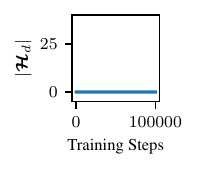}
        \end{subfigure}\hspace{-0.25cm}
        \begin{subfigure}{0.2\textwidth}
            \includegraphics[width=\textwidth]{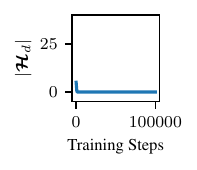}
        \end{subfigure}\hspace{-0.25cm}
        \begin{subfigure}{0.2\textwidth}
            \includegraphics[width=\textwidth]{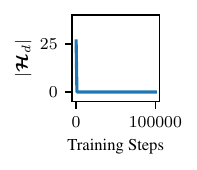}
        \end{subfigure}
        \vspace{0.5cm}
        \caption{The number of event in \(\mathcal{H}_{d}\) returned by \acrshort{model} trained by minimizing \(\lossn\) only.} 
        \label{fig:l_n_retweet}
    \end{subfigure}
    \vspace{0.5cm}
    \caption{Effectiveness of \(\losse\) and \(\lossn\) on \gls{retweet} (from left to right: \((|\data{x}|, |\history|)\) \(=\) \((15, 40),\) \((15, 45),\) \((15, 50),\) \((20, 50),\) \((25, 50)\)).}
    
    \label{fig:efficiency_retweet}
\end{figure*}

\begin{figure*}[!ht]
    \centering
    \begin{subfigure}{\textwidth}
        \begin{subfigure}{0.2\textwidth}
            \includegraphics[width=\textwidth]{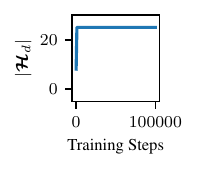}
        \end{subfigure}\hspace{-0.25cm}
        \begin{subfigure}{0.2\textwidth}
            \includegraphics[width=\textwidth]{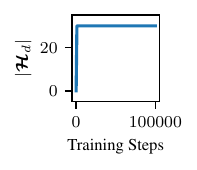}
        \end{subfigure}\hspace{-0.25cm}
        \begin{subfigure}{0.2\textwidth}
            \includegraphics[width=\textwidth]{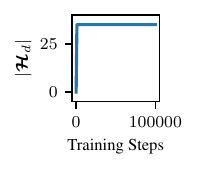}
        \end{subfigure}\hspace{-0.25cm}
        \begin{subfigure}{0.2\textwidth}
            \includegraphics[width=\textwidth]{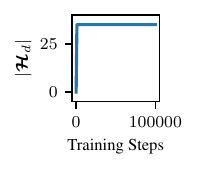}
        \end{subfigure}\hspace{-0.25cm}
        \begin{subfigure}{0.2\textwidth}
            \includegraphics[width=\textwidth]{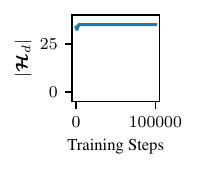}
        \end{subfigure}\hspace{-0.25cm}
        \caption{The number of event in \(\mathcal{H}_{d}\) returned by \acrshort{model} trained by minimizing \(\losse\) only.} 
        \vspace{0.5cm}
        \label{fig:l_te_yelp}
    \end{subfigure}
    \begin{subfigure}{\textwidth}
        \begin{subfigure}{0.2\textwidth}
            \includegraphics[width=\textwidth]{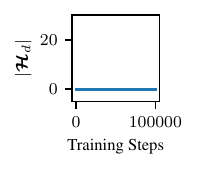}
        \end{subfigure}\hspace{-0.25cm}
        \begin{subfigure}{0.2\textwidth}
            \includegraphics[width=\textwidth]{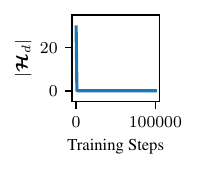}
        \end{subfigure}\hspace{-0.25cm}
        \begin{subfigure}{0.2\textwidth}
            \includegraphics[width=\textwidth]{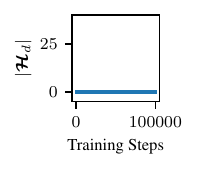}
        \end{subfigure}\hspace{-0.25cm}
        \begin{subfigure}{0.2\textwidth}
            \includegraphics[width=\textwidth]{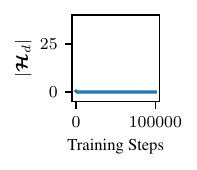}
        \end{subfigure}\hspace{-0.25cm}
        \begin{subfigure}{0.2\textwidth}
            \includegraphics[width=\textwidth]{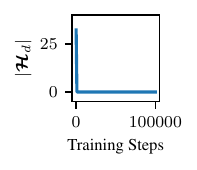}
        \end{subfigure}
        \caption{The number of event in \(\mathcal{H}_{d}\) returned by \acrshort{model} trained by minimizing \(\lossn\) only.} 
        \label{fig:l_n_yelp}
    \end{subfigure}
    \vspace{0.5cm}
    \caption{Effectiveness of \(\losse\) and \(\lossn\) on \gls{yelp} (from left to right: \((|\data{x}|, |\history|)\) \(=\) \((15, 40),\) \((15, 45),\) \((15, 50),\) \((20, 50),\) \((25, 50)\)).}
    \label{fig:efficiency_yelp}
\end{figure*}

\end{document}